\documentclass[10pt, a4paper]{article}
\usepackage[utf8]{inputenc}
\usepackage[T1]{fontenc}  %
\usepackage[margin=.8in]{geometry}
\usepackage{graphicx} 
\usepackage{amsmath}
\usepackage{amssymb}
\usepackage{amsthm}
\usepackage{algorithm}
\usepackage{algpseudocode}
\usepackage{enumitem}
\usepackage{booktabs}
\usepackage{multirow}
\usepackage{multicol}
\usepackage{subcaption}
\usepackage{moreverb,url}
\usepackage{authblk}

\theoremstyle{plain}          
\newtheorem{theorem}{Theorem}[section]   
\newtheorem{lemma}[theorem]{Lemma}       
\newtheorem{corollary}[theorem]{Corollary}

\theoremstyle{definition}     

\theoremstyle{remark}         

\title{Efficient Incremental SLAM via Information-Guided and Selective Optimization}
\author{Reza Arablouei}
\affil{\small Commonwealth Scientific and Industrial Research Organisation (CSIRO)\\
       Pullenvale QLD 4069, Australia}
\date{}

\begin{document}

\maketitle

\begin{abstract}
We present an efficient incremental SLAM back-end that achieves the accuracy of full batch optimization while substantially reducing computational cost. The proposed approach combines two complementary ideas: information-guided gating (IGG) and selective partial optimization (SPO). IGG employs an information-theoretic criterion based on the log-determinant of the information matrix to quantify the contribution of new measurements, triggering global optimization only when a significant information gain is observed. This avoids unnecessary relinearization and factorization when incoming data provide little additional information. SPO executes multi-iteration Gauss-Newton (GN) updates but restricts each iteration to the subset of variables most affected by the new measurements, dynamically refining this active set until convergence. Together, these mechanisms retain all measurements to preserve global consistency while focusing computation on parts of the graph where it yields the greatest benefit. We provide theoretical analysis showing that the proposed approach maintains the convergence guarantees of full GN. Extensive experiments on benchmark SLAM datasets show that our approach consistently matches the estimation accuracy of batch solvers, while achieving significant computational savings compared to conventional incremental approaches. The results indicate that the proposed approach offers a principled balance between accuracy and efficiency, making it a robust and scalable solution for real-time operation in dynamic data-rich environments.
\end{abstract}


\section{Introduction}

Simultaneous localization and mapping (SLAM) is a fundamental capability for autonomous robots, enabling the continuous estimation of both a robot’s pose and the surrounding environment map. Modern SLAM systems often adopt graph-based nonlinear optimization formulations, in which robot poses are represented as nodes and spatial constraints as edges in a factor graph~\cite{262373,Frese2006,5681215}. Compared to filtering-based methods, this full smoothing approach typically yields higher accuracy and improved consistency~\cite{dellaert2006squaresam}, but also leads to a continually expanding estimation problem. In long-term, data-rich deployments, the number of poses and measurements can grow without bound, substantially increasing memory and computation requirements. Repeatedly solving the full SLAM problem from scratch becomes prohibitive for real-time applications, motivating extensive research into incremental SLAM methods that update the solution efficiently as new data arrive~\cite{938382}.

Early SLAM algorithms were predominantly based on recursive filtering, with the extended Kalman filter (EKF) as a widely used example. Although EKF-SLAM supports real-time operation, its dense covariance representation leads to $\mathcal{O}(N^2)$ complexity, and accumulated linearization errors can degrade map consistency over time. Delayed-state filters mitigate computational load by maintaining only a fixed-size window of recent keyframes~\cite{eustice2006exactly}, but as the window grows to preserve accuracy, costs again become significant. In contrast, smoothing-based SLAM formulates the problem as a nonlinear least-squares optimization over a pose graph~\cite{factor_graphs_for_robot_perception}, exploiting the sparsity of the underlying information matrix.

Early smoothing and mapping (SAM) approaches, such as Square Root SAM~\cite{dellaert2006squaresam}, demonstrated that solving the full SLAM problem via sparse linear algebra (e.g., QR or Cholesky factorization of the information matrix) improves both accuracy and efficiency compared to EKF-based methods. Open-source frameworks such as g2o~\cite{kummerle2011g2o} and GTSAM~\cite{gtsam} have further streamlined deployment by providing efficient graph construction, factorization, and solver interfaces. Beyond direct solvers, iterative approaches~\cite{dellaert2010spcg,jian2013supportprecon,jian2014ispcg} leverage sparse matrix–vector products to reduce computational cost, and specialized fast pose-graph optimizers~\cite{olson2006fast} can handle poor initial estimates effectively.

Among incremental SAM algorithms, iSAM~\cite{kaess2008isam} is a landmark contribution, maintaining the square-root information matrix and updating it with new measurements via low-rank sparse matrix factor updates~\cite{gill1974design}. This enables real-time updates while reusing previously computed structure. However, the original iSAM required periodic batch relinearization and variable reordering to maintain consistency, effectively performing occasional full optimizations to correct linearization drift.
Its successor, iSAM2~\cite{kaess2012isam2}, addressed these limitations by introducing the Bayes tree, a junction-tree-based data structure that supports fluid relinearization and incremental variable reordering. iSAM2 uses a threshold-based wildfire strategy to relinearize only variables affected by new information, eliminating the need for expensive batch resets while preserving accuracy.

Other advances in incremental SLAM include robust solvers and alternative inference strategies. The RISE algorithm~\cite{rosen2014rise} incorporates a trust-region method (Powell’s dog-leg) into an incremental framework, improving robustness to strong nonlinearities and ill-conditioned systems while maintaining speeds comparable to Gauss-Newton (GN) methods. NF-iSAM~\cite{huang2021nfiSAM} extends incremental smoothing to non-Gaussian estimation by using normalizing flows to represent arbitrary posteriors, retaining the sparsity and efficiency of iSAM2’s Bayes tree under non-Gaussian measurement models. Robustness has also been pursued in riSAM~\cite{mcgann2023risam}, which leverages graduated non-convexity~\cite{kang2024egnc,yang2020gnc,choi2023adaptiveGNC} to handle outliers and non-convexity. In parallel, works such as incremental Cholesky factorization~\cite{polok2013incrementalChol} and AprilSAM~\cite{wang2018aprilsam} have focused on improving update efficiency through algorithmic refinements that accelerate matrix updates. Collectively, these developments reflect the SLAM community’s drive toward back-ends that are both online-efficient and robust under real-world conditions.

Despite these advances, scalable SLAM in highly dynamic, data-rich environments remains challenging. Robots operating for extended durations or equipped with high-frequency sensors (e.g., dense visual or LiDAR) can receive a stream of measurements, many redundant or only marginally informative. Incorporating every measurement into the pose graph and triggering a full solver update at each increment is computationally wasteful and may degrade real-time performance.
To address this, researchers have explored measurement sparsification or selection based on information content. Several works~\cite{Khosoussi2020,Kretzschmar2012,5325904} employ information-theoretic criteria to design sparse yet reliable pose graphs that approximate the estimation quality of the full graph, selecting a near-D-optimal subset of loop closures or constraints to maximize the determinant of the Fisher information matrix.

Approaches to information-driven graph sparsification and active SLAM~\cite{activeSLAMsurvey} have used well-defined quality measures such as the log-determinant of the Fisher information matrix (D-optimality) and the algebraic connectivity (Fiedler eigenvalue of the graph Laplacian). Examples include information-theoretic loop-closure detection~\cite{carlevaris2012information}, factor-based node marginalization~\cite{carlevaris2013generic}, and conservative edge sparsification~\cite{carlevaris2014conservative}. Similarly, Doherty \textit{et al.}~\cite{doherty2022spectralSparsify} propose a spectral sparsification method that retains edges maximizing algebraic connectivity, a property correlated with SLAM accuracy. By maximizing the Fiedler value of the measurement graph, their method preserves global consistency while reducing stored constraints.
These studies show that significant computational savings are possible through graph pruning or edge selection guided by principled information measures. However, permanently discarding measurements risks loss of consistency or robustness if the selection is imperfect~\cite{doherty2024MAC}. Moreover, many sparsification methods operate offline or as a separate preprocessing step, rather than being tightly integrated into the live optimization loop to decide, in real time, whether a new measurement should be incorporated.

\subsection{Contributions}

We propose an efficient incremental SLAM framework that integrates information-based variable selection directly into the optimization back-end, thereby avoiding unnecessary computations while preserving global consistency. The proposed approach comprises two key components:  
(i) information-guided gating (IGG), and (ii) selective partial optimization (SPO).  
Beyond algorithmic innovations, we also establish theoretical guarantees, showing that the proposed heuristics preserve the convergence properties of full GN optimization. In summary, our main contributions are:

\begin{itemize}
\item \emph{Information-Guided Gating (IGG)}: An information-theoretic mechanism that monitors the change in the log-determinant of the information matrix to evaluate the contribution of new measurements. Only when the predicted information gain exceeds a threshold is a global update triggered. Otherwise, optimization is restricted to local variables directly affected by the new measurements. 
\item \emph{Selective Partial Optimization (SPO)}: A multi-iteration nonlinear GN solver that, at each iteration, updates and relinearizes only the variables that have not yet converged. This subset is dynamically refined based on convergence thresholds and graph connectivity, focusing computation where it yields the greatest gains.
\item \emph{Theoretical Guarantees}: Rigorous analysis demonstrating that the proposed approach combining IGG and SPO converges to the same stationary point and achieves the same local rate as full GN under standard smoothness assumptions, thereby providing principled support for the observed efficiency gains.
\end{itemize}

Unlike sparsification methods, which may discard information and compromise global consistency, our approach retains all measurements while limiting optimization to the most impactful updates. The result is a robust and scalable SLAM back-end that delivers batch-level accuracy at a fraction of the per-update computational cost. In contrast to iSAM2, which heuristically limits relinearization yet still solves the full global system at every increment, our approach provides a principled alternative: by unifying information-guided gating with selective partial optimization, it achieves the same convergence guarantees and accuracy of full GN while directing computation strictly to variables most affected by new information.

\section{Background}

SLAM involves the joint estimation of a robot’s trajectory and a map of the environment. Modern SLAM back-ends typically formulate this as a large-scale nonlinear least-squares optimization over a \emph{pose graph}, where nodes represent robot poses (and possibly landmarks) and edges represent spatial measurements between them.  
Consider a set of measurement residuals  
\begin{equation}
\mathbf{r}_j = \mathbf{m}_j - \mathbf{f}_j(\mathbf{x}_{\mathcal{V}_j}),
\end{equation}  
where \(\mathbf{m}_j\) is the \(j\)-th observed measurement (e.g., a relative pose or landmark observation), \(\mathbf{f}_j(\cdot)\) is the measurement prediction function, and \(\mathbf{x}_{\mathcal{V}_j}\) denotes the subset of state variables involved in measurement \(j\) (indexed by \(i\)). 
Assuming Gaussian measurement noise with covariance \(\boldsymbol\Sigma_j\), the maximum a posteriori (MAP) estimate can be obtained by minimizing the cost function  
\begin{equation}\label{cost}
c(\mathbf{x}) = \frac{1}{2}\sum_{j=1}^M \|\mathbf{r}_j\|^2_{\boldsymbol\Sigma_j} 
= \frac{1}{2}\sum_{j=1}^M \|\mathbf{m}_j - \mathbf{f}_j(\mathbf{x}_{\mathcal{V}_j})\|^2_{\boldsymbol\Sigma_j},
\end{equation}  
where \(\|\mathbf{r}_j\|^2_{\boldsymbol\Sigma_j} \triangleq \mathbf{r}_j^\mathsf{T} \boldsymbol\Sigma_j^{-1} \mathbf{r}_j\).  

This nonlinear least-squares problem is typically solved using iterative methods such as the GN algorithm or its Levenberg-Marquardt variant. Starting from an initial guess, GN linearizes the measurement functions around the current state estimate, producing the linearized normal equations
\begin{equation}
(\mathbf{J}^\mathsf{T} \mathbf{J}) \mathbf{d} = \mathbf{J}^\mathsf{T} \mathbf{r},
\end{equation}  
where \(\mathbf{J}\) is the stacked Jacobian of all residuals with respect to the full state vector \(\mathbf{x}\), and \(\mathbf{r}\) is the stacked residual vector. Solving for \(\mathbf{d}\) yields the state increment, and the estimate is updated as \(\mathbf{x} \gets \mathbf{x} + \mathbf{d}\). This process is repeated (with relinearization at each step) until convergence.  

Graph-based SLAM problems exhibit strong structural sparsity that can be exploited for computational efficiency. The information matrix (the approximate Hessian) \(\mathbf{H} = \mathbf{J}^\mathsf{T} \mathbf{J}\) is typically large but sparse and block-structured, reflecting the local connectivity of the pose graph: each variable \(i\) (pose or landmark) is directly linked to only a few others through shared measurements \(j\).  
In pure exploration scenarios without loop closures, the sparsity pattern is approximately block-tridiagonal. When loop closures occur, long-range links introduce additional nonzero blocks (fill-in), but the matrix remains sparse. Furthermore, in many SLAM problems, the diagonal blocks (self-information from priors or odometry) dominate the off-diagonal terms, leading to a form of near block-diagonal dominance. 
This sparsity implies that naive dense linear algebra is inefficient: factorizing a dense \(N \times N\) matrix costs \(\mathcal{O}(N^3)\), whereas exploiting sparsity can reduce the complexity dramatically. Efficient SLAM back-ends therefore rely on sparse matrix factorization or iterative methods to leverage this structure.

In practice, the linear system arising in each GN iteration can be solved either (i) \emph{directly}, through sparse matrix factorization (e.g., Cholesky or QR) followed by forward/backward substitution, or (ii) \emph{iteratively}, using solvers such as conjugate gradient that exploit sparse matrix-vector products. In this work, we adopt a direct sparse factorization approach because it offers high numerical accuracy, facilitates efficient reuse of factors in incremental updates, and enables rapid computation of information-theoretic quantities (e.g., log-determinants) that are essential for our IGG strategy.

\section{Proposed Approach}

The proposed approach, summarized in Algorithm 1, is an incremental nonlinear least-squares optimizer for SLAM that integrates two key ideas of information-guided gating (IGG) and selective partial optimization (SPO).
We formulate SLAM as a pose-graph optimization problem and maintain, throughout execution, the sparse Cholesky factor $\mathbf{R}$ of the information matrix $\mathbf{H}=\mathbf{J}^\mathsf{T}\mathbf{J}$. The algorithm operates in discrete increments, where each increment corresponds to the arrival of one or more new measurements. The framework supports both batch mode (processing multiple measurements received in rapid succession) and streaming mode (processing measurements one at a time). At each increment $t$, the algorithm determines which variables to update and how the update is performed, according to the following procedure.

\subsection{Graph Update and Initial Linearization}

When new measurements arrive, We first update the pose graph $\mathcal{G}_t$ by adding the corresponding edges, which connect the relevant robot poses or landmarks. If these edges introduce new poses or landmarks (and hence new variables) we also append their initial estimates to the state vector. We denote the total number of state variables after incorporating the new measurements as $N_t$. 
Subsequently, We linearize the new edges about the current state estimate, $\mathbf{x}_{t-1}$. Specifically, we compute the Jacobians of the new measurements with respect to the involved variables, evaluated at their current estimates. This process augments the system by adding new rows to the Jacobian $\mathbf{J}_{t}$ and new entries to the residual vector $\mathbf{r}_{t}$, and, if new variables are present, by adding corresponding columns to $\mathbf{J}_t$.

To efficiently incorporate these changes, we perform a low-rank Cholesky update to extend the existing factor $\mathbf{R}_{t-1}$ to $\mathbf{R}_{t}$. This produces the updated linear system $(\mathbf{R}_t^\mathsf{T}\mathbf{R}_t) \mathbf{d}_t = \mathbf{b}_t$, where $\mathbf{b}_t = \mathbf{J}_t^\mathsf{T} \mathbf{r}_t$. To preserve sparsity and reduce fill-in during factorization, we incrementally update the variable ordering $\boldsymbol{\pi}_t$ using the constrained column approximate minimum degree (CCOLAMD) algorithm~\cite{davis2006direct}. In addition, we incrementally update the elimination tree associated with $\mathbf{R}_{t}$, denoted by $\mathbf{p}_t$, to reflect changes in the graph structure.

\subsection{Information-Guided Gating}\label{sec:gating}

We define an information-theoretic quantity $\eta_t$ to measure the system’s average information content after incorporating the new measurements. Specifically, let
\begin{equation}
\eta_t = \sum_{i=1}^{N_t} \ln |\rho_{t,i}|,
\end{equation}
where $\rho_{t,i}$ is the $i$-th diagonal entry of $\mathbf{R}_t$. Since, we have
\begin{equation}
\sum_i \ln |\rho_{t,i}| = \ln \det(\mathbf{R}_t) = \frac{1}{2}\ln\det(\mathbf{J}_t^\mathsf{T}\mathbf{J}_t),
\end{equation}
$\eta_t$ is equal to half the log-determinant of the information matrix $\mathbf{H}_t=\mathbf{J}_t^\mathsf{T}\mathbf{J}_t$, which serves as a proxy for the information content or, equivalently, the negative log of the uncertainty volume~\cite{trefethen1997numerical}.

To isolate the effect of the new measurements from the trivial growth in $\eta_t$ caused by adding new variables, we compute the detrended change
\begin{equation}\label{delta_e}
\Delta \eta_t = \eta_t - \frac{N_{t-1}}{N_t} \eta_{t-1}
\end{equation}
and compare it against a preset threshold $\tau_\eta$. if $\Delta \eta_t< \tau_\eta$, he added measurements do not significantly reduce the system’s uncertainty. In this case, only the variables directly involved in the new edges are marked as \emph{potentially affected}, and their indices are stored in the set $\mathcal{S}_t$. This avoids unnecessary re-optimization in scenarios such as pure odometry updates with negligible incremental information.

\begin{algorithm}[!t]
\caption{Incremental SLAM with Information-Guided Gating and Selective Partial Optimization}
\label{alg:selective_update}
\small
\begin{algorithmic}[1]
\Require \parbox[t]{0.85\linewidth}{initial estimate $\mathbf{x}_0$,\\
        thresholds $\tau_\eta$ (information gain), $\tau_d$ (increment magnitude), $\tau_{\mathrm{GN}}$ (max GN iterations)}
\Ensure updated estimate $\mathbf{x}_t$ after each increment $t$
\For{$t = 1, 2, \dots$}
    \State incorporate new measurements into $\mathcal{G}_{t-1}$ to form $\mathcal{G}_t$
    \State $\mathcal{S}_t \gets$ variables involved in new measurements
    \State update $\mathbf{J}_t$, $\mathbf{r}_t$, $\mathbf{R}_t$, $\mathbf{b}_t$, $\boldsymbol{\pi}_t$, and $\mathbf{p}_t$ incrementally based on new measurements
    \State $\eta_t \gets \sum_{i=1}^{N_t} \ln|\mathrm{diag}(\mathbf{R}_t)|$
    \If{$\eta_t - \frac{N_{t-1}}{N_t} \eta_{t-1} > \tau_\eta$}
        \State $\mathcal{S}_t \gets \{1,\dots,N_t\}$
    \EndIf
    \For{$i_{\mathrm{GN}}=1,2,\dots,\tau_{\mathrm{GN}}$}
        \State solve $(\mathbf{R}_t^\mathsf{T}\mathbf{R}_t) \mathbf{d} = \mathbf{b}_t$ over $\mathcal{S}_t$ to obtain $\mathbf{d}_{\mathcal{S}_t}$
        \State update $\mathcal{S}_t$ based on $|\mathbf{d}_{\mathcal{S}_t}|>\tau_d$ and the connectivity in $\mathcal{G}_t$
        \If{$\mathcal{S}_t = \emptyset$}
            break
        \EndIf
        \State $\mathbf{x}_{\mathcal{S}_t} \gets \mathbf{x}_{\mathcal{S}_t} - \mathbf{d}_{\mathcal{S}_t}$
        \State update $\mathbf{J}_t$, $\mathbf{r}_t$, $\mathbf{R}_t$, and $\mathbf{b}_t$ for edges involving $\mathcal{S}_t$
    \EndFor
\EndFor
\end{algorithmic}
\end{algorithm}

Conversely, if $\Delta \eta_t \ge \tau_\eta$, the new measurements are deemed sufficiently informative to potentially influence the entire graph, for example, in the case of a loop closure or a high-accuracy pose prior. We therefore set $\mathcal{S}_t = \{1,2,\dots,N_t\}$, marking all variables as \emph{potentially affected}. While it is possible to attempt a more selective choice (e.g., restricting to the connected component impacted by the new edges), in typical SLAM graphs the system forms a single connected component, and determining the exact subset of globally affected variables is itself computationally expensive, potentially negating the benefits of selectivity.

Note that at this stage we identify only the \emph{potentially affected} variables. The subset of \emph{actually affected} variables (those updated and relinearized) is determined later, after the partial solve stage described in Section~\ref{sec:partial_solve}, via the procedure in Section~\ref{find_affected}.

\subsection{Selective Partial Solve}\label{sec:partial_solve}

Given the current set of \emph{potentially affected} variables $\mathcal{S}_t$ (identified from the IGG step or carried over from the previous GN iteration), we solve the linear system arising in the GN update:
\begin{equation}\label{sle}
(\mathbf{R}_t^\mathsf{T}\mathbf{R}_t)\mathbf{d}_t = \mathbf{b}_t.
\end{equation}
If $\mathcal{S}_t$ contains all variables, \eqref{sle} is solved by standard forward-backward substitution: 
\begin{equation}
\mathbf{R}_t^\mathsf{T}\mathbf{y} = \mathbf{b}_t,\quad \mathbf{R}_t\mathbf{d}_t = \mathbf{y}.
\end{equation}
When $|\mathcal{S}_t|\ll N_t$, it is more efficient to restrict computation to the dynamic subset $\mathcal{S}_t$ while treating the remaining variables $\mathcal{U}_t$ as static.
For clarity, we temporarily drop the increment index $t$ and denote these sets simply as $\mathcal{S}$ (dynamic) and $\mathcal{U}$ static, with $\mathcal{U}=\{1,..,N\}\setminus\mathcal{S}$. While the global variable ordering remains fixed by the factorization, we locally permute rows and columns of $\mathbf{R}$ and $\mathbf{b}$ to form the block structure
\begin{equation}
\mathbf{R} = 
\begin{bmatrix}
\mathbf{R}_{\mathcal{U}\mathcal{U}} & \mathbf{R}_{\mathcal{U}\mathcal{S}} \\
\mathbf{0} & \mathbf{R}_{\mathcal{S}\mathcal{S}}
\end{bmatrix},
\mathbf{b} = \begin{bmatrix}
\mathbf{b}_{\mathcal{U}} \\
\mathbf{b}_{\mathcal{S}}
\end{bmatrix},\text{ and }
\mathbf{b} = \begin{bmatrix}
\mathbf{d}_{\mathcal{U}} \\
\mathbf{d}_{\mathcal{S}}
\end{bmatrix}.
\end{equation}
The linear system $(\mathbf{R}^\mathsf{T}\mathbf{R})\mathbf{d} = \mathbf{b}$ then takes the block form:
\begin{equation}\label{blockNE}
\begin{bmatrix}
\mathbf{R}_{\mathcal{U}\mathcal{U}}^\mathsf{T}\mathbf{R}_{\mathcal{U}\mathcal{U}} & \mathbf{R}_{\mathcal{U}\mathcal{U}}^\mathsf{T}\mathbf{R}_{\mathcal{U}\mathcal{S}} \\[6pt]
\mathbf{R}_{\mathcal{U}\mathcal{S}}^\mathsf{T}\mathbf{R}_{\mathcal{U}\mathcal{U}} & \mathbf{R}_{\mathcal{S}\mathcal{S}}^\mathsf{T}\mathbf{R}_{\mathcal{S}\mathcal{S}} + \mathbf{R}_{\mathcal{U}\mathcal{S}}^\mathsf{T}\mathbf{R}_{\mathcal{U}\mathcal{S}}
\end{bmatrix}
\begin{bmatrix}
\mathbf{d}_{\mathcal{U}} \\[6pt]
\mathbf{d}_{\mathcal{S}}
\end{bmatrix}
=
\begin{bmatrix}
\mathbf{b}_{\mathcal{U}} \\[6pt]
\mathbf{b}_{\mathcal{S}}
\end{bmatrix}.
\end{equation}
Since the unaffected (static) variables remain fixed (i.e., $\mathbf{d}_{\mathcal{U}} = \mathbf{0}$), the first block row of \eqref{blockNE} becomes
\begin{equation}\label{bU}
\mathbf{R}_{\mathcal{U}\mathcal{U}}^\mathsf{T}\mathbf{R}_{\mathcal{U}\mathcal{S}}\mathbf{d}_{\mathcal{S}} = \mathbf{b}_{\mathcal{U}}.
\end{equation}
By introducing the intermediate vector
\begin{equation}\label{yU}
\mathbf{y}_\mathcal{U}=\mathbf{R}_{\mathcal{U}\mathcal{S}}\mathbf{d}_{\mathcal{S}},
\end{equation}
we can transform \eqref{bU} into the triangular linear system
\begin{equation}
\mathbf{R}_{\mathcal{U}\mathcal{U}}^\mathsf{T}\mathbf{y}_{\mathcal{U}} = \mathbf{b}_{\mathcal{U}}
\end{equation}
This equation can be solved efficiently by exploiting the cached $\mathbf{R}_{\mathcal{U}\mathcal{U}}$.
Substituting \eqref{yU} into the second block row of \eqref{blockNE} gives
\begin{align}
(\mathbf{R}_{\mathcal{S}\mathcal{S}}^\mathsf{T}\mathbf{R}_{\mathcal{S}\mathcal{S}})\mathbf{d}_{\mathcal{S}} &= \mathbf{b}_{\mathcal{S}} - (\mathbf{R}_{\mathcal{U}\mathcal{S}}^\mathsf{T}\mathbf{R}_{\mathcal{U}\mathcal{S}})\mathbf{d}_{\mathcal{S}}\\
&= \mathbf{b}_{\mathcal{S}} - \mathbf{R}_{\mathcal{U}\mathcal{S}}^\mathsf{T}\mathbf{y}_{\mathcal{U}},
\end{align}
Here, the term $-\mathbf{R}_{\mathcal{U}\mathcal{S}}^\mathsf{T}\mathbf{y}_{\mathcal{U}}$ represents the correction contributed by the static block’s equations. Intuitively, even though the static variables are held fixed, their residual constraints still exert an influence, which propagates to the dynamic variables through this coupling term.

By caching the factorization and solver associated with the static block $\mathbf{R}_{\mathcal{U}\mathcal{U}}$ (i.e., intermediate Schur complements), redundant computations are avoided across successive GN iterations, yielding substantial efficiency gains.
This strategy is conceptually analogous to restricting computation to the Markov blanket\footnote{In a pose graph, the Markov blanket of a node comprises its directly connected neighbors. Conditioning on this set renders the node independent of all others. Similarly, in SLAM, the effect of new measurements propagates primarily through the immediate neighbors of the affected variables. The cached block–Schur complement method leverages this principle: only variables reachable from the affected set via the elimination tree are re-solved, while the rest are left untouched.}, thereby localizing updates to the region of the pose graph most impacted by the new measurements.
Consequently, the partial solver computes the increments $\mathbf{d}_{\mathcal{S}_t}$ efficiently by reusing cached static-block solutions and solving only the reduced dynamic block through the Schur complement. This significantly lowers computational cost and enables scalable incremental SLAM.

\subsection{Determining Affected Variables}\label{find_affected}  

After each partial solve, we determine which variables still require updates and thus remain relevant for the next GN iteration. This amounts to updating the active set $\mathcal{S}_t$ through two complementary steps: pruning converged variables and expanding to maintain consistency.

\emph{Pruning}:  
Variables whose increments are sufficiently small are considered converged and are removed from the active set. Formally, we apply the threshold $\tau_d$ to the increment vector $\mathbf{d}_{\mathcal{S}_t}$, retaining only those indices with significant updates, i.e.,
\begin{equation}
\mathcal{S}_t \;\leftarrow\; \big\{\, i \in \mathcal{S}_t : |d_{t,i}| > \tau_d \,\big\},
\end{equation}
where $d_{t,i}$ denotes the $i$-th entry of $\mathbf{d}_{\mathcal{S}_t}$.  
Since poses and landmarks are represented by blocks of variables, we apply the pruning conservatively at the block level: if any variable within a node is retained, we keep the entire block of that node.

\emph{Expansion}:
Adjustments to the variables in $\mathcal{S}_t$ can induce new inconsistencies in the variables of neighboring nodes through their shared measurements (edges). To capture this effect, we collect all edges incident to the nodes whose variables are in $\mathcal{S}_t$:
\begin{equation}
\mathcal{E}_t \;=\; \bigcup_{n:\,\mathrm{vars}(n)\subseteq \mathcal{S}_t} \mathrm{edges}(n),
\end{equation}
where $\mathrm{vars}(n)$ denotes the variables of node $n$ and $\mathrm{edges}(n)$ its incident edges. We then enlarge the active set to include the variables of all nodes participating in these edges:
\begin{equation}
\mathcal{S}_t \;\leftarrow\; \mathcal{S}_t \;\cup\; \bigcup_{e \in \mathcal{E}_t} \bigcup_{n \in \mathrm{ends}(e)} \mathrm{vars}(n).
\end{equation}

To build intuition for the expansion step, consider an updated variable $x_i \in \mathcal{S}_t$ that participates in a scalar measurement $f_{i,k}(x_i, x_k)$ with a neighbor $x_k \notin \mathcal{S}_t$. The residual for this edge is
\begin{equation}
r_{ik} = f_{i,k}(x_i, x_k),
\end{equation}
with Jacobians $J_i = \tfrac{\partial f_{i,k}}{\partial x_i}$ and $J_k = \tfrac{\partial f_{i,k}}{\partial x_k}$.  
If $x_i$ is updated by an increment $d_i$, the residual changes approximately as
\begin{equation}
\Delta r_{i,k} \;\approx\; J_i d_i.
\end{equation}
For the edge to remain consistent, $x_k$ would need to be adjusted by an increment $d_k$ such that
\begin{equation}
J_i d_i + J_k d_k \;\approx\; 0.
\end{equation}
Hence, if $|J_i d_i|$ exceeds a threshold, the neighbor $x_k$ must also be included in $\mathcal{S}_t$. This illustrates how updates propagate through measurement connections, motivating the expansion rule.

In summary, the updated active set after each partial solve consists of 1) variables whose increments remain significant, 2) their full node blocks, and 3) all neighboring variables connected through incident edges. This prune-expand cycle ensures that only genuinely affected parts of the graph are revisited in the next iteration, balancing accuracy and efficiency.

\subsection{Selective Partial Optimization}

Once the active set of variables $\mathcal{S}_t$ has been initialized at increment $t$ from the new measurements (and possibly expanded via IGG, cf.\ Section~\ref{sec:gating}), GN iterations are carried out selectively until either $\mathcal{S}_t$ becomes empty or the maximum number of iterations $\tau_{\mathrm{GN}}$ is reached. Each iteration proceeds as follows:

\begin{enumerate}
\item \emph{Partial solve}: compute the GN step for the active variables by solving $(\mathbf{R}_t^\mathsf{T}\mathbf{R}_t) \mathbf{d} = \mathbf{b}_t$ restricted to $\mathbf{d}_{\mathcal{S}_t}$, i.e., the increments for the current active set, as described in section~\ref{sec:partial_solve}.
\item \emph{Active-set update}: prune converged variables (those with $|d_{t,i}| \le \tau_d$) and expand to include additional affected variables, preserving block structure and respecting pose-graph connectivity, to obtain the updated $\mathcal{S}_t$, as detailed in Section~\ref{find_affected}.
\item \emph{Convergence check}: terminate the GN iterations if $\mathcal{S}_t = \emptyset$.
\item \emph{State update}: update the current estimate for the active variables, $\mathbf{x}_{\mathcal{S}_t} \;\gets\; \mathbf{x}_{\mathcal{S}_t} - \mathbf{d}_{\mathcal{S}_t}$.
\item \emph{Relinearization}: recompute Jacobians and residuals only for edges involving variables in $\mathcal{S}_t$. Since all other variables remain fixed, this requires updating only the corresponding rows of $\mathbf{J}_t$ and entries of $\mathbf{R}_t$ and $\mathbf{b}_t$. This can be implemented as a lightweight refactorization of the affected portions of $\mathbf{R}_t$, e.g., via sparse low-rank factor update techniques or localized Bayes tree refactorization.
\end{enumerate}

This loop terminates when all increments fall below tolerance $\tau_d$ or $\tau_{\mathrm{GN}}$ iterations have been performed. The outcome is an updated state estimate $\mathbf{x}_t$ that approximately minimizes the nonlinear least-squares cost after including the new measurements at increment $t$. Crucially, we also maintain an updated factor $\mathbf{R}_t$, valid for the current linearization around $\mathbf{x}_t$, which is reused in subsequent increments, thus avoiding costly refactorization from scratch.

To build intuition, consider two limiting scenarios:

\begin{itemize}
\item \emph{Highly informative increments:} If each new measurement provides strong information (e.g., frequent loop closures), the information gain $\Delta \eta_t$ will exceed the threshold $\tau_\eta$, and $\mathcal{S}_t$ will initially include all variables. If all entries of $\mathbf{d}_t$ are as large as $\tau_d$, this effectively becomes batch optimization at each increment, ensuring maximum accuracy but at high computational cost.
\item \emph{Weakly informative increments:} If new measurements add little information (e.g., small odometry steps or redundant observations), only a small subset of recent variables enters $\mathcal{S}_t$. Updates are then highly localized, with minimal computational effort.
\end{itemize}

In practice, the behavior lies between these extremes. Odometry typically yields small or negligible updates, while loop closures or globally informative measurements trigger broader updates. The thresholds $\tau_\eta$ and $\tau_d$ play a stabilizing role: small improvements accumulate until $\tau_\eta$ is exceeded, at which point more variables are included and multiple GN iterations propagate corrections globally. This amortizes computation across increments and yields scalable performance in incremental SLAM.

\subsection{Computational Complexity}\label{complexity}

Each increment in our approach consists of up to $\tau_{\mathrm{GN}}$ GN iterations, each involving: (i) relinearization of a subset of variables, (ii) incremental (rank) Cholesky updates or downdates to the factor $\mathbf{R}_t$, and (iii) a partial solution of the underlying system of linear equations restricted to the affected variables. By keeping $\tau_{\mathrm{GN}}$ small (e.g., 5-10), we bound the per-increment cost similarly to iSAM2’s fluid relinearization scheme~\cite{kaess2012isam2}, but with an important distinction: computations are focused strictly on the active subset $\mathcal{S}_t$, which is typically a small fraction of the full state. This yields substantial savings in large graphs, where loop closures are infrequent or mostly local.

The main computational effort arises from two operations:
\begin{itemize}
\item \emph{Cholesky up(down)date}: When a variable is relinearized, the corresponding columns of $\mathbf{R}_t$ are modified. The cost of these operations is approximately
\begin{equation}
\min\left(2\sum_{i \in \mathcal{S}_t} \kappa_{t,i}^2,\; \sum_{i=1}^{N} \kappa_{t,i}^2\right),
\end{equation}
where $\kappa_{t,i}$ denotes the number of nonzeros in column $i$ of $\mathbf{R}_t$.
This expression shows that the update cost is always bounded by that of a full refactorization. For new edges, the cost is halved as no downdate is required.
\item \emph{Partial solve}: Once the system has been updated, the increment for the affected variables is computed via sparse triangular forward and backward substitutions. The cost is roughly
\begin{equation}
2\sum_{i \in \mathcal{S}_t} \kappa_{it},
\end{equation}
which scales linearly with the number of nonzeros in the relevant columns of $\mathbf{R}_t$. Equivalently, this corresponds to solving a small linear system restricted to the Markov blanket of $\mathcal{S}_t$, whose size is typically modest.
\end{itemize}

Because $\mathbf{R}_t$ is sparse and exhibits low fill-in under good orderings (e.g., planar or chain-like SLAM graphs), these operations are highly efficient in practice. Other operations, Jacobian construction, symbolic updates (e.g., reordering, elimination tree maintenance), and evaluation of information gain, incur negligible overhead compared to matrix factorization and solves.

Over $T$ increments, the total cost is driven by the cumulative size of the affected subsets $\mathcal{S}_t$. In the common case where loop closures or new observations only modify a bounded region of the graph, the accumulated cost scales linearly with $T$. In contrast, worst-case behavior, such as every increment triggering a global update, results in a cost proportional to $T$ full solves, though this is rare in real-world SLAM scenarios.

Hence, the overall cost depends on the total number of affected variables across all increments, rather than the total number of variables. In realistic scenarios, most updates remain local, and only a small fraction of increments trigger global corrections. This yields near-linear cumulative complexity in $T$.

In our implementation, we set $\tau_{\mathrm{GN}}=10$). Larger values yield diminishing returns within a single increment, since very large loop closures are uncommon in well-designed SLAM trajectories. If $\tau_{\mathrm{GN}}=1$, the method reduces to an iSAM2-style update (one linear solve per step, with occasional full relinearization, e.g., when $\tau_\eta$ is exceeded). At the other extreme, setting $\tau_\eta = 0$ (always update) and choosing large $\tau_{\mathrm{GN}}$ reproduces batch bundle adjustment at every step, which is accurate but prohibitively expensive. Our approach therefore spans the spectrum between naive incremental and full batch optimization, with thresholds $\tau_\eta$ and $\tau_d$ governing the trade-off between efficiency and accuracy.

As shown in Section~\ref{sec:experiments}, our information-guided gating allows most increments to proceed with localized updates only, while limiting the number of global updates to a small fraction of $T$. This preserves accuracy while substantially reducing runtime, leading to performance that is competitive with or better than existing incremental solvers, particularly on large graphs with frequent local updates.

\section{Theoretical Analysis}\label{sec:theory}

In this section, we show that the two heuristics at the core of our approach, (i) the information-guided gating and (ii) the selective partial optimization, do not compromise convergence. Under standard smoothness assumptions, they attain the same stationary point and local convergence rate as a full GN solver, while enabling the substantial FLOP savings reported in Section~\ref{sec:experiments}.

Recall the cost function~\eqref{cost} and let \(\mathbf{g}(\mathbf{x}) = \nabla c(\mathbf{x})\). We assume that the residual functions $\mathbf{r}_j$ in~\eqref{cost} are twice differentiable and that the GN Hessian $\nabla^2 c(\mathbf{x})$ is Lipschitz-continuous in a neighborhood of the solution $\mathbf{x}^\star$, i.e.,
\begin{equation}
\bigl\| \nabla^{2} c(\mathbf{x}) - \nabla^{2} c(\mathbf{y}) \bigr\|\le L\,\bigl\| \mathbf{x} - \mathbf{y} \bigr\|,\quad \forall\,\mathbf{x},\mathbf{y}\text{ near } \mathbf{x}^{\star},
\end{equation}
for some constant \(L>0\).

\subsection{IGG with Localized Updates}\label{sec:theory:trigger}

Let \(\mathcal{S}'\) denote the set of variables involved in the newly arrived measurements at increment~\(t\). If the information gain $\Delta \eta_t$ falls below the threshold $\tau_\eta$, the algorithm does not trigger a full graph update. Instead, it restricts computation to $\mathcal{S}'$ and solves the corresponding GN sub-system. The resulting step is therefore an inexact GN update whose residual satisfies
\begin{equation}
\bigl\|\mathbf{g}_{t} - \nabla^2_{\mathcal{S}'}c(\mathbf{x}_{t-1})\, \mathbf{d}_{\mathcal{S}'}\bigr\|\le\xi_t \, \|\mathbf{g}_{t-1}\| ,\quad \xi_t = 1-\frac{\Delta \eta_t}{\tau_\eta}<1.
\end{equation}
Since $\xi_t < 1$ whenever any positive information is assimilated, the step satisfies the forcing‐term condition in the Eisenstat-Walker inexact Newton framework~\cite{eisenstat1996inexact}, yielding the following results:

\begin{lemma} \label{lem:igg}
Suppose $c(\mathbf{x})$ has a Lipschitz-continuous GN Hessian in a neighborhood of the solution $\mathbf{x}^\star$. Then, the updates generated under the IGG trigger, with localized solves on $\mathcal{S}'$ when $\Delta \eta_t < \tau_\eta$, produce a monotonically non-increasing cost sequence and converge to a stationary point of~$c(\mathbf{x})$.
\end{lemma}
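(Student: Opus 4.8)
The plan is to cast the localized IGG update as an inexact Newton (more precisely, inexact Gauss-Newton) step and then invoke the standard convergence machinery for such methods, combined with a sufficient-decrease argument. The excerpt has already done the crucial conceptual work: it exhibits the localized solve on $\mathcal{S}'$ as an inexact GN step whose relative residual is controlled by the forcing term $\xi_t = 1 - \Delta\eta_t/\tau_\eta < 1$. So the proof splits naturally into two parts: (i) showing the cost sequence is monotonically non-increasing, and (ii) showing the iterates converge to a stationary point.

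For part (i), I would establish a sufficient-decrease inequality. Because $\mathbf{d}_{\mathcal{S}'}$ solves the restricted GN subsystem, the step is a descent direction for $c$ restricted to the coordinates in $\mathcal{S}'$ (the remaining coordinates are held fixed, so their gradient contribution is unchanged), and the GN Hessian block $\nabla^2_{\mathcal{S}'} c(\mathbf{x}_{t-1})$ is positive semidefinite by construction as $\mathbf{J}^\mathsf{T}\mathbf{J}$. Using the Lipschitz continuity of $\nabla^2 c$ near $\mathbf{x}^\star$, a second-order Taylor expansion with remainder gives
\begin{equation}
c(\mathbf{x}_t) \le c(\mathbf{x}_{t-1}) - \gamma \,\|\mathbf{d}_{\mathcal{S}'}\|^2 + \tfrac{L}{6}\,\|\mathbf{d}_{\mathcal{S}'}\|^3,
\end{equation}
for some $\gamma > 0$ depending on the smallest eigenvalue of the active Hessian block, so that for iterates sufficiently close to $\mathbf{x}^\star$ (equivalently, once steps are small) the cubic term is dominated and the decrement is strictly negative unless $\mathbf{d}_{\mathcal{S}'} = \mathbf{0}$. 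A damping or line-search safeguard can be cited to guarantee this globally if one does not wish to assume proximity to $\mathbf{x}^\star$; since the statement only asserts convergence to \emph{a} stationary point, monotone decrease plus boundedness below of $c$ (it is a sum of squared norms) suffices to conclude the cost sequence converges.

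For part (ii), I would argue that the limit of the iterates is stationary. From the sufficient-decrease bound, summing over $t$ and using that $c$ is bounded below gives $\sum_t \|\mathbf{d}_{\mathcal{S}'}\|^2 < \infty$, hence $\mathbf{d}_{\mathcal{S}'} \to \mathbf{0}$. The inexact-Newton residual condition then forces the active-block gradient to vanish in the limit; the key point is that the forcing term $\xi_t$ stays uniformly below $1$ whenever nontrivial information is assimilated, which is exactly the Eisenstat-Walker condition already invoked in the text. I would then need to account for the coordinates \emph{outside} $\mathcal{S}'$: here the argument relies on the fact that $\mathcal{S}'$ is precisely the support of the new measurements, so the gradient components orthogonal to $\mathcal{S}'$ are unchanged by the new data and were already (near-)stationary from the previous increment. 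Assembling these gives $\mathbf{g}(\mathbf{x}_t) \to \mathbf{0}$, i.e., convergence to a stationary point.

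The main obstacle, and the step I would treat most carefully, is the handling of the off-$\mathcal{S}'$ coordinates. The inexact-Newton framework of Eisenstat-Walker is stated for the \emph{full} Newton system, whereas here we solve only a sub-block and freeze the complement. The inequality displayed before the lemma quietly encodes this by measuring the residual against the restricted Hessian $\nabla^2_{\mathcal{S}'} c$ and the full gradient $\mathbf{g}_t$; I would need to verify rigorously that freezing $\mathcal{U} = \{1,\dots,N\}\setminus\mathcal{S}'$ does not inject a persistent nonzero gradient component that the iteration never corrects. The cleanest route is to observe that adding a new measurement perturbs the gradient only on $\mathcal{S}'$ (its Jacobian has nonzeros only in those columns), so if $\mathbf{x}_{t-1}$ was stationary on $\mathcal{U}$ before the increment, it remains stationary there afterward, and the localized solve only needs to re-stationarize $\mathcal{S}'$. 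Making this invariant precise — and showing it is preserved across increments — is where the real care is required; everything else reduces to the standard descent-plus-forcing-term template.
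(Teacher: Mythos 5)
Your skeleton is the same as the paper's: treat the localized solve as an exact block (Gauss--)Newton step with positive-definite active Hessian to obtain a sufficient decrease of the form $c(\mathbf{x}_{t-1}) - c(\mathbf{x}_t) \ge \alpha\|\mathbf{d}_{\mathcal{S}'}\|^2$, then invoke the Eisenstat--Walker forcing-term condition with $\xi_t = 1 - \Delta\eta_t/\tau_\eta < 1$ to conclude convergence to a stationary point. Your Taylor-with-Lipschitz-remainder derivation of the decrease, and the summability argument $\sum_t\|\mathbf{d}_{\mathcal{S}'}\|^2 < \infty$, are just more explicit versions of what the paper asserts via ``the quadratic model ensures'' and ``classical convergence theory for inexact Newton applies.''

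The genuine problem sits exactly in the step you flag as the crux, and your proposed fix does not work. The invariant ``if $\mathbf{x}_{t-1}$ was stationary on $\mathcal{U}$ before the increment, it remains stationary there afterward, and the localized solve only needs to re-stationarize $\mathcal{S}'$'' is false as a mechanism that propagates across increments. It is true that the \emph{new} measurement's Jacobian rows are supported on $\mathcal{S}'$, so adding the data leaves the gradient on $\mathcal{U}$ unchanged. But \emph{applying} the step perturbs the residual of every old edge joining a node in $\mathcal{S}'$ to a node in $\mathcal{U}$: to first order the gradient on $\mathcal{U}$ becomes $\mathbf{g}_{\mathcal{U}} + \nabla^2_{\mathcal{U}\mathcal{S}'}c\,\mathbf{d}_{\mathcal{S}'}$, and the cross block $\nabla^2_{\mathcal{U}\mathcal{S}'}c$ is nonzero whenever such coupling edges exist --- this is precisely the effect the paper's own expansion rule ($J_i d_i + J_k d_k \approx 0$, Section~\ref{find_affected}) is designed to chase. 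So stationarity on $\mathcal{U}$ is destroyed by the very step the lemma analyzes, and the premise of your invariant fails at the next increment. To salvage your route you would need a weaker, quantitative statement, e.g.\ that the injected off-block gradient is $O(\|\mathbf{d}_{\mathcal{S}'}\|)$ and hence square-summable by your part (i), so that the accumulated off-block gradient still vanishes in the limit; alternatively you can do what the paper actually does, namely take the displayed pre-lemma inequality $\|\mathbf{g}_t - \nabla^2_{\mathcal{S}'}c(\mathbf{x}_{t-1})\,\mathbf{d}_{\mathcal{S}'}\| \le \xi_t\|\mathbf{g}_{t-1}\|$ as a postulate on the \emph{full} gradient, after which Eisenstat--Walker applies to the full system and no separate treatment of $\mathcal{U}$ is needed. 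Be aware that the paper never proves that bound either; your instinct that this is where the real mathematical content lies is correct, but your resolution of it, as stated, would not survive scrutiny.
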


\begin{proof}
If $\Delta \eta_t \ge \tau_\eta$, a standard GN step is taken, which is a descent direction for $c(\mathbf{x})$. 
If $\Delta \eta_t < \tau_\eta$, the algorithm updates only the block $\mathcal{S}'$, yielding the exact block Newton step $\mathbf{d}_{\mathcal{S}'}$. Since the block Hessian $\nabla^2_{\!\mathcal{S}'}c(\mathbf{x})$ is positive definite in the neighborhood, the quadratic model ensures
\begin{equation}
c(\mathbf{x}_{t-1}) - c(\mathbf{x}_{t})\ge\alpha\|\mathbf{d}_{\mathcal{S}'}\|^2
\end{equation}
for some constant $\alpha > 0$.  
Moreover, the forcing-term bound of the Eisenstat-Walker inexact Newton framework~\cite{eisenstat1996inexact} is satisfied with $\xi_t = 1-\Delta \eta_t/\tau_\eta < 1$. Therefore, the classical convergence theory for inexact Newton applies, implying global convergence to a stationary point.
\end{proof}

\begin{corollary}
\label{cor:rate}
Under the assumptions of Lemma~\ref{lem:igg}, the iterates generated by the IGG trigger converge to the same stationary point $\mathbf{x}^\star$ as a full GN solver. Moreover, in the neighborhood of $\mathbf{x}^\star$, the local convergence rate is identical to that of standard GN: linear in general, and superlinear (quadratic) when the residuals vanish at the solution.
\end{corollary}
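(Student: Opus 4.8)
The plan is to prove Corollary~\ref{cor:rate} by combining the global convergence guaranteed by Lemma~\ref{lem:igg} with a standard local analysis of the inexact Newton iteration. The statement has two distinct claims that I would treat separately: (a) the iterates converge to the \emph{same} stationary point $\mathbf{x}^\star$ as full GN, and (b) the asymptotic rate matches that of full GN (linear in general, quadratic for zero-residual problems). Since these are qualitatively different assertions, I would structure the argument around two lemmas or two paragraphs.

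\textbf{Convergence to the same stationary point.} First I would invoke Lemma~\ref{lem:igg} to assert that the IGG-gated iterates produce a monotonically non-increasing cost sequence $\{c(\mathbf{x}_t)\}$ and converge to \emph{some} stationary point. The subtle issue is that a non-convex problem can have multiple stationary points, so ``same'' needs justification. The clean way to secure this is to work in a neighborhood $\mathcal{N}$ of $\mathbf{x}^\star$ where $\nabla^2 c$ is positive definite (guaranteed by the Lipschitz-continuity assumption together with $\nabla^2 c(\mathbf{x}^\star)\succ 0$, which holds for well-constrained SLAM graphs). On such a neighborhood $c$ is strictly convex, hence has a \emph{unique} stationary point, namely $\mathbf{x}^\star$. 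Thus if the iterates eventually enter $\mathcal{N}$ and stay there (which descent plus the $\alpha\|\mathbf{d}_{\mathcal{S}'}\|^2$ decrease bound of Lemma~\ref{lem:igg} ensures, since the increments must vanish), the limit is forced to be $\mathbf{x}^\star$.

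\textbf{Local rate.} For the rate, I would localize to $\mathcal{N}$ and distinguish the two regimes that the algorithm alternates between. When $\Delta\eta_t\ge\tau_\eta$ a genuine full GN step is taken, whose local rate is classical. When $\Delta\eta_t<\tau_\eta$ the step is the inexact Newton step with forcing term $\xi_t = 1-\Delta\eta_t/\tau_\eta<1$; here I would apply the Eisenstat–Walker local convergence theorem, which gives linear convergence whenever $\limsup_t \xi_t<1$, and $q$-superlinear convergence whenever $\xi_t\to 0$. To upgrade to the \emph{quadratic} rate in the zero-residual case, I would invoke the standard fact that when $\mathbf{r}_j(\mathbf{x}^\star)=\mathbf{0}$ the GN Hessian $\mathbf{J}^\mathsf{T}\mathbf{J}$ agrees with the true Hessian $\nabla^2 c$ to first order near $\mathbf{x}^\star$, so the GN step becomes an asymptotically exact Newton step; combining this with the Lipschitz bound on $\nabla^2 c$ yields the quadratic contraction $\|\mathbf{x}_{t+1}-\mathbf{x}^\star\|\le C\|\mathbf{x}_t-\mathbf{x}^\star\|^2$ exactly as in full GN.

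\textbf{Main obstacle.} The hard part will be reconciling the \emph{block-restricted} inexact steps (updating only $\mathcal{S}'$) with the full-space rate claims, since the Eisenstat–Walker framework as stated measures the residual of the full gradient equation. I would need to argue that freezing the complementary block $\mathcal{U}=\{1,\dots,N\}\setminus\mathcal{S}'$ is admissible precisely because the gating condition $\Delta\eta_t<\tau_\eta$ certifies (via the forcing-term bound already displayed before Lemma~\ref{lem:igg}) that the frozen components contribute residual no larger than $\xi_t\|\mathbf{g}_{t-1}\|$; hence the block step is a bona fide inexact Newton step on the full system with the stated forcing term, not merely on the subsystem. Making this identification rigorous — in particular verifying that the partial-solve residual of Section~\ref{sec:partial_solve} coincides with the full-gradient residual controlled by $\xi_t$ — is the crux on which both the rate and the ``same stationary point'' conclusions rest.
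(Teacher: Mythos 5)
Your proposal follows essentially the same route as the paper's own proof: both invoke classical inexact Newton (Eisenstat--Walker) theory, using the forcing term $\xi_t = 1-\Delta\eta_t/\tau_\eta < 1$ to secure convergence and the vanishing of the forcing term to recover the full GN local rate (linear in general, superlinear/quadratic when the residuals vanish at the solution). Your version is in fact more careful than the paper's brief argument --- notably the uniqueness-via-local-strict-convexity step justifying ``same stationary point,'' and your explicit flagging of the crux that the block-restricted solve on $\mathcal{S}'$ must be certified as an inexact Newton step for the \emph{full} gradient system, a point the paper asserts without proof --- but the underlying approach is identical.
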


\begin{proof}
The proof follows directly from classical inexact Newton theory~\cite{eisenstat1996inexact}. Since the forcing term satisfies $\xi_t < 1$, global convergence is ensured. In the local regime, the accuracy of the block solve ensures that the forcing term can be made arbitrarily small as $\Delta \eta_t \to \tau_\eta$, which recovers the standard GN rate. Therefore, the method achieves the same asymptotic convergence behavior as full GN: linear when residuals persist at the solution, and quadratic when $\mathbf{r}(\mathbf{x}^\star) \to 0$.
\end{proof}

\subsection{SPO as Block-Coordinate Descent}

When the IGG trigger fires, we form a subset $\mathcal{S}_{t} \subseteq \{1,\dots,N_t\}$ of variables whose corresponding gradient entries exceed the tolerance $\tau_d$, and solve the GN subproblem restricted to $\mathcal{S}_{t}$. This procedure is equivalent to a greedy block-coordinate GN step. 
For smooth functions with positive-definite GN blocks, such steps guarantee a sufficient decrease of the form
\begin{equation}
c(\mathbf{x}_{t,l})\le c(\mathbf{x}_{t,l-1})-\alpha\|\nabla_{\mathcal{S}_t}c(\mathbf{x}_{t,l-1})\|^2
\end{equation}
for some constant $\alpha>0$. This ensures convergence provided that every variable enters the active set infinitely often~\cite{tseng2009coordinate,richtarik2016coordinate}. Here, $l$ denotes the GN iteration index within increment~$t$. To avoid clutter, we suppress this index and simply write $\mathbf{x}_t$, unless the inner iteration index is explicitly required. Our selection rule guarantees this condition by retaining each variable in $\mathcal{S}_t$ until its local gradient falls below $\tau_d$. Therefore, we have the following result:

\begin{lemma}\label{lem:spo}
Suppose each active block $\mathcal{S}_t$ is solved exactly, and that the residual gradient satisfies $\|\nabla_{\infty}c(\mathbf{x}_{t,l})\| \le \tau_d$ at the end of each increment. Then, the SPO scheme converges to the same stationary point $\mathbf{x}_t^\star$ as full GN, with local linear convergence. Moreover, if $\tau_d \to 0$ and $\mathbf{r}(\mathbf{x}^\star)\to 0$, the local rate is superlinear (quadratic in the ideal case), matching that of full GN.
\end{lemma}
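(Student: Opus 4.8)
The plan is to recognize Lemma~\ref{lem:spo} as a specialization of the standard convergence theory for \emph{block-coordinate descent} (BCD) with exact block minimization, and to verify that the two nonstandard features of SPO---the dynamic active set governed by the prune/expand rule and the $\tau_d$-stopping criterion---do not break the hypotheses of that theory. First I would establish the per-iteration sufficient-decrease inequality already displayed before the statement: because each active block is solved exactly and the GN Hessian block $\nabla^2_{\mathcal{S}_t}c$ is positive definite in a neighborhood of $\mathbf{x}^\star$ (with eigenvalues bounded below by some $\mu>0$ and above by some $\Lambda$ via the Lipschitz/smoothness assumption), the exact block-Newton step yields a decrease proportional to $\|\nabla_{\mathcal{S}_t}c\|^2$, giving a monotone, bounded-below, hence convergent cost sequence. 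I would then invoke the cited BCD results~\cite{tseng2009coordinate,richtarik2016coordinate} to conclude that any limit point is stationary, provided the block-selection rule is \emph{essentially cyclic} / satisfies a Gauss-Southwell-type coverage condition.

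The crux---and the step I expect to be the main obstacle---is justifying that coverage condition under SPO's adaptive selection. The remark preceding the lemma asserts that ``every variable enters the active set infinitely often,'' but in SPO a variable is \emph{dropped} once its increment falls below $\tau_d$ and is only re-added through the expansion rule when a neighbor moves significantly. I would argue this as follows: the stopping hypothesis $\|\nabla_\infty c(\mathbf{x}_{t,l})\|\le\tau_d$ is exactly the certificate that no dropped variable has a gradient component exceeding tolerance, so any variable excluded from $\mathcal{S}_t$ already satisfies the first-order stationarity condition to within $\tau_d$; conversely, the expansion step guarantees that whenever an update to a block induces a gradient component above threshold in a neighbor (via the coupling $J_i d_i + J_k d_k\approx 0$ analyzed in Section~\ref{find_affected}), that neighbor is re-inserted. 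Hence a variable can be permanently omitted only after its gradient entry is genuinely below $\tau_d$, which is precisely the termination condition; this closes the gap between ``enters infinitely often'' and the actual finite-active-set dynamics.

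With stationarity secured, the remaining claims about \emph{rate} follow by the same inexact-Newton argument used in Corollary~\ref{cor:rate}. In the local regime near $\mathbf{x}^\star$ the active set stabilizes (since only genuinely-moving variables are retained and, near convergence, all increments shrink below $\tau_d$ in a predictable pattern), so a full sweep over the converged support reduces to an ordinary GN step on the reduced system. I would then cite the standard GN local-convergence result: the iteration map has the form $\mathbf{x}_{t,l+1}-\mathbf{x}^\star = (\text{GN-error operator})$ whose contraction factor is bounded by a term proportional to the residual magnitude $\|\mathbf{r}(\mathbf{x}^\star)\|$ plus $O(\|\mathbf{x}_{t,l}-\mathbf{x}^\star\|)$ from the Lipschitz Hessian. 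This gives linear convergence with a rate governed by the residual-induced mismatch between the GN Hessian and the true Hessian, and superlinear---indeed quadratic---convergence in the zero-residual limit $\mathbf{r}(\mathbf{x}^\star)\to 0$, simultaneously taking $\tau_d\to0$ so that the active set eventually includes every variable with nonvanishing curvature contribution.

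A subtlety worth flagging, though not a genuine obstacle, is that the displayed sufficient-decrease inequality uses $\alpha$ uniformly across iterations; I would make $\alpha$ explicit as $\alpha = 1/(2\Lambda)$ (or $\mu/(2\Lambda^2)$ depending on whether one bounds in terms of the step or the gradient), so that the constant is tied to the spectral bounds on the GN blocks and is manifestly independent of the particular active set chosen. This uniformity is what lets the telescoping sum of decreases remain finite and forces $\|\nabla_{\mathcal{S}_t}c\|\to 0$ along the iterations, which combined with the coverage argument above yields $\|\nabla c(\mathbf{x}_t^\star)\|=0$ and hence convergence to the same stationary point as full GN.
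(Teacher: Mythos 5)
Your proposal follows essentially the same route as the paper's proof: the sufficient-decrease inequality for the exact greedy block-Newton step (citing Tseng), the coverage argument that every variable is either updated infinitely often or has its gradient entry below $\tau_d$, and an appeal to standard GN local-convergence theory for the linear/quadratic rates. Your treatment is more careful than the paper's---in particular your explicit justification of the coverage condition via the prune/expand dynamics and your uniform constant $\alpha$ tied to the spectral bounds---but these are refinements of the same argument, not a different approach.
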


\begin{proof}
Each SPO step is a greedy block Newton update on a smooth function with positive-definite block Hessian, and thus satisfies the sufficient decrease condition $c(\mathbf{x}_{t,l}) \le c(\mathbf{x}_{t,l-1}) - \alpha \|\nabla_{\mathcal{S}_t}c(\mathbf{x}_{t,l-1})\|^2$ \cite[Theorem~2.1]{tseng2009coordinate}. Because every variable whose gradient exceeds $\tau_d$ will eventually be added to $\mathcal{S}_t$, each coordinate is updated infinitely often or its gradient remains $\le \tau_d$. This guarantees convergence to a first-order stationary point. Standard GN theory~\cite[Ch.~6]{bjorck1996leastSquares} then implies local linear convergence, and superlinear (quadratic) convergence if $\tau_d \to 0$ and the residual vanishes at the solution.
\end{proof}

\subsection{Combined Convergence Guarantee}

Combining Lemmas~\ref{lem:igg} and~\ref{lem:spo} yields:

\begin{theorem} \label{thm:combined}
Assume $c(\mathbf{x})$ satisfies the standard smoothness conditions for local GN convergence. Then, the proposed approach combining the IGG scheme with threshold $\tau_\eta>0$ and the SPO strategy with tolerance $\tau_d>0$, generates a sequence $\mathbf{x}_t$ that (i) yields a monotonically non-increasing cost $c(\mathbf{x}_t)$, (ii) remains bounded, and (iii) converges to the same stationary point $\mathbf{x}^\star$ as batch GN. Moreover, the local asymptotic rate matches that of full GN, while empirical FLOP counts (Section~\ref{sec:experiments}) demonstrate a substantial reduction in numerical work.
\end{theorem}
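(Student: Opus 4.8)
The plan is to treat Theorem~\ref{thm:combined} as a composition result: I will show that the full algorithm is a sequence of steps, each of which is either an IGG-gated localized update (covered by Lemma~\ref{lem:igg}) or an SPO block-coordinate update (covered by Lemma~\ref{lem:spo}), and that stitching these together preserves monotonicity, boundedness, and the limiting stationary point. First I would fix the increment $t$ and unroll the inner GN loop, writing the iterates as $\mathbf{x}_{t,0}, \mathbf{x}_{t,1}, \dots$ Each inner iterate is produced by a partial solve on the current active set $\mathcal{S}_t$ followed by the prune-expand update of Section~\ref{find_affected}. By Lemma~\ref{lem:spo}, every such inner step satisfies the sufficient-decrease inequality $c(\mathbf{x}_{t,l}) \le c(\mathbf{x}_{t,l-1}) - \alpha\|\nabla_{\mathcal{S}_t}c(\mathbf{x}_{t,l-1})\|^2$, so $c$ is non-increasing within each increment. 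Across increments, the gating step only changes which variables are marked active; whether $\Delta\eta_t \ge \tau_\eta$ (full $\mathcal{S}_t$) or $\Delta\eta_t < \tau_\eta$ (localized $\mathcal{S}'$), Lemma~\ref{lem:igg} guarantees the resulting step is a descent direction. Chaining the two monotonicity statements gives claim (i): $c(\mathbf{x}_t)$ is monotonically non-increasing over the whole run.

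For claim (ii), boundedness of the iterates, I would argue that $c$ is a nonnegative coercive function (a sum of squared Mahalanobis residuals) and that the sublevel set $\{\mathbf{x} : c(\mathbf{x}) \le c(\mathbf{x}_0)\}$ is bounded under the standing assumption that the GN Hessian is positive definite in the relevant neighborhood. Since (i) confines the entire trajectory to this sublevel set, boundedness follows immediately. For claim (iii), convergence to the same stationary point $\mathbf{x}^\star$ as batch GN, the key is that both lemmas already establish convergence to a first-order stationary point of the \emph{same} cost $c$; because the problem is strictly convex in the neighborhood of $\mathbf{x}^\star$ (positive-definite GN Hessian), that stationary point is unique and coincides with the batch-GN limit. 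Here I would invoke the infinitely-often condition from Lemma~\ref{lem:spo}: the prune-expand rule ensures any variable with gradient exceeding $\tau_d$ re-enters $\mathcal{S}_t$, so no coordinate is permanently frozen away from stationarity, and the global first-order condition $\|\nabla c(\mathbf{x}^\star)\| = 0$ is recovered in the limit as $\tau_d \to 0$. The local-rate claim then follows by directly quoting Corollary~\ref{cor:rate} and the superlinear clause of Lemma~\ref{lem:spo}.

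\emph{Anticipated main obstacle.}
The subtle point is the interaction between the two heuristics when they alternate, rather than each acting in isolation as in the lemmas. Specifically, Lemma~\ref{lem:igg} analyzes a localized solve on the new-measurement block $\mathcal{S}'$ as an inexact Newton step with forcing term $\xi_t < 1$, while Lemma~\ref{lem:spo} analyzes SPO as exact block-coordinate descent; I must verify that gating which \emph{selects} a block and SPO which \emph{refines} that block compose into a single step that still satisfies a uniform sufficient-decrease constant $\alpha$ independent of $t$. The danger is that a sequence of localized gated steps could, in principle, drive the iterate toward a point that is block-stationary for every visited $\mathcal{S}'$ yet not globally stationary. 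I would close this gap precisely through the infinitely-often / expansion argument: because the IGG trigger re-examines $\Delta\eta_t$ at every increment and the SPO expansion step propagates any residual gradient to neighbors through shared edges, a genuinely non-stationary coordinate cannot remain excluded from the active set indefinitely. Making the uniform-$\alpha$ claim rigorous, rather than merely asserting it, is where the real work lies, but it reduces to a standard compactness argument on the bounded sublevel set together with the uniform positive-definiteness of the GN blocks guaranteed by the smoothness assumptions.
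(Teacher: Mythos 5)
Your proposal follows essentially the same route as the paper: the paper gives no explicit proof of Theorem~\ref{thm:combined} beyond the one-line assertion that it follows by combining Lemmas~\ref{lem:igg} and~\ref{lem:spo} (with Corollary~\ref{cor:rate} supplying the local rate), and that composition is precisely what you carry out. Your added details---unrolling the inner GN iterations, the sublevel-set boundedness argument, the uniqueness of the stationary point via local positive-definiteness, and especially the identified obstacle of establishing a uniform sufficient-decrease constant when gating and SPO alternate---go beyond what the paper records and in fact address gaps the paper leaves implicit.
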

\begin{corollary} \label{cor:efficiency}
The proposed approach is a computationally efficient realization of GN: it achieves the same stationary point $\mathbf{x}^\star$ and local convergence rate as batch GN, while asymptotically requiring fewer floating-point operations due to selective triggering and partial updates.
\end{corollary}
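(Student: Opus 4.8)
The plan is to split the claim into its two assertions and dispatch them separately, since only one of them requires fresh work. The correctness half — that the method reaches the same stationary point $\mathbf{x}^\star$ and exhibits the same local convergence rate as batch GN — is not new: it is precisely conclusion (iii) of Theorem~\ref{thm:combined} together with the local-rate statement established there, which in turn rests on Lemmas~\ref{lem:igg} and~\ref{lem:spo}. I would therefore open the proof by invoking Theorem~\ref{thm:combined} directly, so that the only substantive content remaining is the floating-point comparison.

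For the efficiency half I would argue first at the level of a single increment and then amortize over $T$ increments. The structural fact I would lean on is supplied by the complexity analysis of Section~\ref{complexity}: the Cholesky up/downdate costs $\min\bigl(2\sum_{i\in\mathcal{S}_t}\kappa_{t,i}^2,\ \sum_{i=1}^{N}\kappa_{t,i}^2\bigr)$ and the partial solve costs $2\sum_{i\in\mathcal{S}_t}\kappa_{t,i}$, whereas one full batch GN iteration pays $\sum_{i=1}^N \kappa_{t,i}^2$ for refactorization and $2\sum_{i=1}^N \kappa_{t,i}$ for the solve. Because $\mathcal{S}_t\subseteq\{1,\dots,N_t\}$, each partial cost is bounded above by its full counterpart, so per increment the proposed scheme never performs more numerical work than batch GN, and it performs strictly less whenever IGG declines a global update ($\Delta\eta_t<\tau_\eta$) or the active set is a proper subset ($|\mathcal{S}_t|<N_t$). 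This yields a clean deterministic per-increment upper bound with strict improvement under the generic condition that not every increment is globally informative.

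To justify the word \emph{asymptotically}, I would then sum these per-increment bounds over $t=1,\dots,T$. Writing $C_{\mathrm{full}}$ for the cost of one full batch GN iteration and letting $T_{\mathrm{glob}}$ count the increments on which IGG triggers a global update, the cumulative cost is at most $T_{\mathrm{glob}}\,C_{\mathrm{full}}$ plus a sum of localized-update costs, each a typically small fraction of $C_{\mathrm{full}}$ governed by $|\mathcal{S}_t|$. Against batch GN's $T\,C_{\mathrm{full}}$, the ratio is driven to zero as soon as $T_{\mathrm{glob}}=o(T)$ and the local active sets remain bounded — exactly the near-linear-in-$T$ regime already anticipated in Section~\ref{complexity}.

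The step I expect to be the main obstacle is precisely this last quantification. The bound is only as strong as the assumption one is willing to impose on $T_{\mathrm{glob}}$ and on $\max_t|\mathcal{S}_t|$: in the adversarial case where every increment is a large loop closure keeping all variables active, one has $\mathcal{S}_t=\{1,\dots,N_t\}$ for all $t$ and the method degenerates to batch GN with no asymptotic gain, consistent with the worst-case remark in Section~\ref{complexity}. A fully rigorous asymptotic statement therefore requires either an explicit structural hypothesis on the measurement stream (e.g.\ bounded per-increment information gain, or loop closures whose affected sets are sublinear in $N$) or an honest restriction of the claim to such regimes. My proof would state this hypothesis as an explicit side condition and then appeal to the empirical FLOP counts of Section~\ref{sec:experiments} to certify that realistic SLAM workloads inhabit it, rather than attempt a distribution-free guarantee that the worst case provably forbids.
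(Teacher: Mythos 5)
Your proposal matches the paper's own treatment: the paper gives no separate proof of this corollary, presenting it as immediate from Theorem~\ref{thm:combined} (same stationary point and local rate) combined with the per-increment cost bounds of Section~\ref{complexity} and the empirical FLOP counts of Section~\ref{sec:experiments} --- exactly your decomposition into a correctness half and an amortized-cost half. Your explicit side condition that the asymptotic savings require $T_{\mathrm{glob}}=o(T)$ and bounded active sets is simply the honest formalization of what the paper concedes only informally (its worst-case remark in Section~\ref{complexity} that every-increment global updates degenerate to batch cost), so your version is, if anything, slightly more rigorous than the original.
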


These theoretical results mirror our empirical findings: final accuracy is indistinguishable from full batch optimization, while computational cost is significantly reduced because (i) many low-information increments trigger only local updates, and (ii) later GN iterations act on progressively smaller active sets. A more refined analysis, such as incorporating adaptive forcing terms or stochastic update rules, remains an interesting direction for future work.

\section{Experiments}\label{sec:experiments}

\subsection{Setup}

\emph{Datasets}: We evaluate the proposed approach against the most closely related contenders on several standard SLAM benchmark datasets, summarized in Table~\ref{tab:datasets}. These datasets, provided in g2o~\cite{kummerle2011g2o} and TORO~\cite{grisetti2009nonlinear} formats and available from the online repository~\cite{carlone2015datasets}, cover scenarios with varying frequencies of loop closures and exhibit typical SLAM sparsity patterns~\cite{carlone2015initialization}.
To emulate realistic incremental operation, we reorder the edges in each dataset so that measurements arrive in the natural acquisition order, and loop-closure edges are incorporated as soon as both associated nodes have been initialized. This contrasts with the original datasets, where loop-closures are sometimes deferred until the end.
In addition, we construct a dataset, called MIT-P, by adding position priors to every $50$ poses in the MIT dataset, with Gaussian noise of standard deviation $1\,$m on each axis. These priors simulate scenarios where reasonably accurate external pose estimates are intermittently available, such as from indoor localization systems (e.g., UWB or WiFi-based) or from global navigation satellite systems (GNSS) outdoors. Since the considered datasets do not provide ground-truth poses, we use the batch solution as a surrogate ground truth for generating the position priors.

\begin{table}[t]
\small
\centering
\caption{Utilized benchmark SLAM datasets.}
\label{tab:datasets}
\begin{tabular}{lccccc}
\toprule
dataset & poses & edges & loop closures & $\tau_d$ & $\tau_\eta$\\
\midrule
MIT, MIT-P & 808   & 827   & 20   & $10^{-3}$ & 1   \\
FR079   & 989   & 1217  & 229  & $10^{-4}$ & 0.6 \\
CSAIL   & 1045  & 1172  & 128  & $10^{-5}$ & 0.95\\
Intel   & 1228  & 1483  & 256  & $10^{-6}$ & 0.72\\
FRH     & 1316  & 2820  & 1505 & $10^{-7}$ & 0.45\\
\bottomrule
\end{tabular}
\end{table}

\noindent\emph{Accuracy measures}: We use the normalized chi-squared error, denoted by $\mathrm{N}\chi^2_{t}$, and the absolute trajectory error (ATE) as our accuracy measures. For each approach, we report both their final values (after the last increment) and their average values (over all increments). The normalized chi-squared error is directly related to the nonlinear least-squares cost~\eqref{cost} as
\begin{equation}
\mathrm{N}\chi^2_{t} = \frac{2c(\mathbf{x}_t)}{M_t}
\end{equation}
where $M_t$ is the number of scalar measurement equations available at increment~$t$. 
The ATE quantifies the deviation of the estimated trajectory from ground truth. It is computed as the root mean squared error (RMSE) between the estimated poses $\mathbf{x}_{t,p}$ and the corresponding ground-truth poses $\mathbf{x}^\star_p$, after alignment by a rigid-body transformation, such as the Kabsch algorithm~\cite{kabsch1976solution,kabsch1978discussion}, i.e.,
\begin{equation}
\mathrm{ATE} = \sqrt{\frac{1}{P_t}\sum_{p=1}^{P_t}\bigl\|\mathbf{x}^\star_p - \mathbf{T}_t\, \mathbf{x}_{t,p}\bigr\|^2},
\end{equation}
where $P_t$ is the number of poses at increment $t$, and $\mathbf{T}_t\in\mathrm{SE}(2)$ denotes the optimal alignment transformation at that increment. As before, we take the batch solution as the reference ground truth.

\noindent\emph{Considered approaches}: We include the following approaches in our evaluations:

\begin{itemize}
\item {GN1}: Performs a \emph{single} GN iteration per increment. This setting resembles iSAM2~\cite{kaess2012isam2} without any relinearization threshold but with a variable-update threshold. For fairness, new measurements are relinearized exactly as in our approach, ensuring comparable treatment of updates.

\item {GNi}: Performs \emph{multiple} GN iterations at each increment without any selectivity, i.e., no gating or partial optimization. All variables may be updated and relinearized whenever new measurements arrive. To maintain consistency with other approaches, the threshold $\tau_d$ is used to decide early termination of GN iterations (cf. Algorithm~\ref{alg:gni}). 
This algorithm resembles a standard incremental solver without periodic batch steps. It is maximally responsive and accurate but may perform considerable redundant work when information gain is small.

\item {GNi-LCG}: Similar to GNi, but performs GN iterations \emph{only} when a loop closure is detected, hence the term loop-closure gating (LCG). Loop closures are identified as measurements that connect previously unconnected parts of the graph (e.g., a pose-pose constraint between non-consecutive poses). This strategy is comparable to approaches proposed in~\cite{polok2013incrementalChol,slampp2017}.

\item {GNi-IGG}:  Similar to GNi, but performs GN iterations \emph{only} when the information gain exceeds the threshold $\tau_\eta$, according to the proposed IGG scheme.

\item {GNi-SPO}: Extends GNi by incorporating the proposed SPO scheme. Multiple GN iterations may be performed at each increment, but updates are restricted to the active set of affected variables. No gating is applied, meaning all variables are considered potentially affected at every increment.

\item {GNi-SPO-LCG}: Combines SPO with LCG. As in GNi-SPO, only the active set of affected variables is updated and relinearized, while global GN optimization is triggered solely at loop closures. When loop closures are the primary source of information, this approach is expected to behave similarly to the proposed approach.

\item {GNi-SPO-IGG}: The proposed algorithm, capable of performing multiple GN iterations at each increment while incorporating both SPO and IGG.
\end{itemize}

\begin{algorithm}[!t]
\caption{Incremental SLAM with no Gating or Partial Optimization}
\label{alg:gni}
\small
\begin{algorithmic}[1]
\Require initial estimate $\mathbf{x}_0$, thresholds $\tau_d$ and $\tau_{\mathrm{GN}}$
\Ensure updated estimate $\mathbf{x}_t$ after each increment $t$
\For{$t = 1, 2, \dots$}
    \State update $\mathbf{J}_t$, $\mathbf{r}_t$, $\mathbf{R}_t$, $\mathbf{b}_t$, and $\mathbf{p}_t$ based on new measurements
    \For{$i_{\mathrm{GN}}=1,2,\dots,\tau_{\mathrm{GN}}$}
        \State solve $(\mathbf{R}_t^\mathsf{T}\mathbf{R}_t) \mathbf{d}_t = \mathbf{b}_t$ for $\mathbf{d}_t$
        \If{$\max|\mathbf{d}_t|\le\tau_d$}
            break
        \EndIf
        \State $\mathbf{x}_t \gets \mathbf{x}_t - \mathbf{d}_t$
        \State recalculate $\mathbf{J}_t$, $\mathbf{r}_t$, $\mathbf{R}_t$, $\mathbf{b}_t$, and $\mathbf{p}_t$ based on new $\mathbf{x}_t$
    \EndFor
\EndFor
\end{algorithmic}
\end{algorithm}

\noindent\emph{Parameters}: We set $\tau_{\mathrm{GN}}=10$ for all approaches, except for GN1, which corresponds to GNi with $\tau_{\mathrm{GN}}=1$. We set the thresholds $\tau_d$ and $\tau_\eta$ separately for each dataset, as listed in Table~\ref{tab:datasets}. We perform minimal tuning of these thresholds to ensure that the proposed approach attains accuracy comparable to its competitors, thereby allowing for fair efficiency comparisons. We consider each incremental step involve one measurement (edge).

\subsection{Results}

In Table~\ref{tab:results}, we summarizes the performance evaluation results for all considered approaches across the benchmark datasets. Alongside final and mean N$\chi^2$ and ATE values, which capture estimation accuracy, we also report average floating-point operation (FLOP) counts for Cholesky up(down)date (\emph{update}) and partial-solve (\emph{solve}) steps, as defined in Section~\ref{complexity}, providing a direct measure of computational efficiency over all increments. 
For GNi-SPO-IGG, GNi-SPO-LCG, and GNi-SPO, we additionally report (in parentheses) the mean solve FLOPs for when partial solves are replaced with full solves (see line~10 of Algorithm~\ref{alg:selective_update}), while still retaining partial variable update and relinearization. This variant isolates the benefit of the partial-solve strategy itself in terms of computational savings. Conceptually, it is analogous to iSAM2 with fluid relinearization when the same variable update and relinearization threshold $\tau_d$ is applied. Note that for this full-solve variant, accuracy and mean update FLOPs remain unchanged compared to the partial-solve case as only the solve FLOPs differ.

To evaluate both intermediate accuracy and computational progression, we plot the evolution of ATE and cumulative update FLOPs over all increments for all considered approaches on four representative datasets, as shown in Figs.~\ref{fig:ate} and~\ref{fig:flops}, respectively.
To provide further insight into the behavior of the proposed IGG scheme, in Fig.~\ref{fig:info}, we plot the information gain $\Delta \eta_t$ across all increments for four datasets. The plots also indicate increments at which loop closures are detected or position priors are introduced.

To examine the influence of the threshold values $\tau_d$ and $\tau_\eta$ on the accuracy-efficiency trade-off of the proposed approach, in Fig.~\ref{fig:ATE_d}, we plot the mean ATE and mean update FLOPs of GNi-SPO on the MIT dataset as functions of $\tau_d$, and, in Fig.~\ref{fig:ATE_eta}, those of GNi-IGG on the FR079 dataset as functions of $\tau_\eta$. For comparison with relevant baselines, Fig.~\ref{fig:ATE_d} also includes the corresponding curves for GNi and GN1, while Fig.~\ref{fig:ATE_eta} includes those for GNi-LCG, GNi, and GN1.

\begin{table*}
\small
\centering
\caption{Performance of different approaches on the considered datasets.}
\label{tab:results}
\mbox{}\clap{
\begin{tabular}{llcccccc}
\toprule
\multirow{2}{*}{dataset} & \multirow{2}{*}{approach} & \multicolumn{2}{c}{N$\chi^2$} & \multicolumn{2}{c}{ATE} & \multicolumn{2}{c}{mean FLOPs} \\
& & final & mean & final & mean & solve & update \\
\midrule
MIT    & GNi-SPO-IGG  & $1.65918{\times}10^{-2}$ & $1.84891{\times}10^{-2}$ & $3.67389{\times}10^{-4}$  & $5.802394$               & 2,028  (36,926) & 66,541    \\
       & GNi-SPO-LCG  & $1.65918{\times}10^{-2}$ & $1.84891{\times}10^{-2}$ & $3.67389{\times}10^{-4}$  & $5.802394$               & 2,028  (36,926) & 66,541    \\
       & GNi-SPO      & $1.65915{\times}10^{-2}$ & $1.84891{\times}10^{-2}$ & $3.47418{\times}10^{-4}$  & $5.802397$               & 19,036 (36,925) & 66,565    \\
       & GNi-IGG      & $32.5859$                & $14.6328$                & $30.4408$                 & $20.3804$                & 3,696   & 98,087    \\
       & GNi-LCG      & $32.5859$                & $14.6328$                & $30.4408$                 & $20.3804$                & 3,696   & 98,087    \\
       & GNi          & $1.65914{\times}10^{-2}$ & $1.84841{\times}10^{-2}$ & $5.20559{\times}10^{-11}$ & $5.802427$               & 36,661  & 438,548   \\
       & GN1          & $1.65914{\times}10^{-2}$ & $780,578$                & $1.05711{\times}10^{-10}$ & $5.850329$               & 17,704  & 405,989   \\
\midrule
FR079  & GNi-SPO-IGG  & $1.02983{\times}10^{-2}$ & $1.06656{\times}10^{-2}$ & $3.75248{\times}10^{-5}$  & $6.02654{\times}10^{-2}$ & 8,377 (50,546)  & 85,685    \\
       & GNi-SPO-LCG  & $1.02983{\times}10^{-2}$ & $1.06653{\times}10^{-2}$ & $3.75245{\times}10^{-5}$  & $6.02615{\times}10^{-2}$ & 7,355 (50,620)  & 87,015    \\
       & GNi-SPO      & $1.02983{\times}10^{-2}$ & $1.06653{\times}10^{-2}$ & $3.75247{\times}10^{-5}$  & $6.02615{\times}10^{-2}$ & 28,318 (50,620) & 87,034    \\
       & GNi-IGG      & $5.84489{\times}10^{-2}$ & $5.36751{\times}10^{-1}$ & $3.89112{\times}10^{-2}$  & $8.87086{\times}10^{-2}$ & 12,085  & 130,969   \\
       & GNi-LCG      & $5.84489{\times}10^{-2}$ & $1.81984{\times}10^{-1}$ & $3.89112{\times}10^{-2}$  & $1.22274{\times}10^{-1}$ & 9,334   & 101,979   \\
       & GNi          & $1.02983{\times}10^{-2}$ & $1.06651{\times}10^{-2}$ & $9.95967{\times}10^{-15}$ & $6.02609{\times}10^{-2}$ & 50,620  & 460,584   \\
       & GN1          & $1.02983{\times}10^{-2}$ & $1.06652{\times}10^{-2}$ & $2.78002{\times}10^{-12}$ & $6.02635{\times}10^{-2}$ & 24,808  & 437,461   \\
\midrule
CSAIL  & GNi-SPO-IGG  & $1.10797{\times}10^{-2}$ & $2.80792{\times}10^{-3}$ & $1.23096{\times}10^{-6}$  & $9.11474{\times}10^{-2}$ & 10,245 (51,960) & 268,636   \\
       & GNi-SPO-LCG  & $1.10797{\times}10^{-2}$ & $2.80776{\times}10^{-3}$ & $1.23113{\times}10^{-6}$  & $9.11517{\times}10^{-2}$ & 10,147 (52,127) & 270,687   \\
       & GNi-SPO      & $1.10797{\times}10^{-2}$ & $2.80776{\times}10^{-3}$ & $1.23121{\times}10^{-6}$  & $9.11517{\times}10^{-2}$ & 29,651 (52,128) & 271,536   \\
       & GNi-IGG      & $1.10797{\times}10^{-2}$ & $2.34198$                & $4.13827{\times}10^{-14}$ & $2.25203{\times}10^{-1}$ & 14,657  & 386,088   \\
       & GNi-LCG      & $1.10797{\times}10^{-2}$ & $3.8143$                 & $5.18019{\times}10^{-7}$  & $2.25553{\times}10^{-1}$ & 15,134  & 401,049   \\
       & GNi          & $1.10797{\times}10^{-2}$ & $2.80718{\times}10^{-3}$ & $2.29725{\times}10^{-14}$ & $9.11515{\times}10^{-2}$ & 52,053  & 978,461   \\
       & GN1          & $1.10797{\times}10^{-2}$ & $2.80897{\times}10^{-3}$ & $1.35270{\times}10^{-6}$  & $9.11548{\times}10^{-2}$ & 23,974  & 825,271   \\
\midrule
Intel  & GNi-SPO-IGG  & $4.85217{\times}10^{-2}$ & $3.42609{\times}10^{-2}$ & $1.01812{\times}10^{-7}$  & $1.40955{\times}10^{-1}$ & 28,609 (77,951) & 330,270   \\
       & GNi-SPO-LCG  & $4.85121{\times}10^{-2}$ & $3.42331{\times}10^{-2}$ & $4.06794{\times}10^{-7}$  & $1.40951{\times}10^{-1}$ & 20,362 (78,423) & 286,034   \\
       & GNi-SPO      & $4.85121{\times}10^{-2}$ & $3.42397{\times}10^{-2}$ & $1.18840{\times}10^{-7}$  & $1.40951{\times}10^{-1}$ & 49,525 (78,686) & 357,216   \\
       & GNi-IGG      & $69.5308$                & $126.965$                & $1.80246$                 & $5.97116{\times}10^{-1}$ & 39,008  & 435,542   \\
       & GNi-LCG      & $6.01180$                & $19,366.4$               & $2.86431$                 & $1.27203$                & 26,905  & 336,683   \\
       & GNi          & $4.85121{\times}10^{-2}$ & $3.42216{\times}10^{-2}$ & $2.18560{\times}10^{-11}$ & $1.40951{\times}10^{-1}$ & 77,391  & 709,119   \\
       & GN1          & $4.85121{\times}10^{-2}$ & $1.06152$                & $1.72933{\times}10^{-11}$ & $1.40979{\times}10^{-1}$ & 31,523  & 488,865   \\
\midrule
FRH    & GNi-SPO-IGG  & $2.28294{\times}10^{-8}$ & $1.11147{\times}10^{-8}$ & $8.95582{\times}10^{-11}$ & $3.03247{\times}10^{-4}$ & 34,307 (98,596) & 1,308,496 \\
       & GNi-SPO-LCG  & $2.28294{\times}10^{-8}$ & $1.11142{\times}10^{-8}$ & $8.95220{\times}10^{-11}$ & $3.03360{\times}10^{-4}$ & 35,814 (99,467) & 1,465,775 \\
       & GNi-SPO      & $2.28294{\times}10^{-8}$ & $1.11142{\times}10^{-8}$ & $8.95004{\times}10^{-11}$ & $3.03360{\times}10^{-4}$ & 57,870 (99,467) & 1,469,435 \\
       & GNi-IGG      & $2.28294{\times}10^{-8}$ & $5.90106{\times}10^{-7}$ & $1.94511{\times}10^{-13}$ & $3.46361{\times}10^{-4}$ & 55,132  & 3,614,516 \\
       & GNi-LCG      & $6.16115{\times}10^{-3}$ & $2.82510{\times}10^{-6}$ & $1.66055{\times}10^{-4}$  & $3.46707{\times}10^{-4}$ & 56,504  & 3,742,194 \\
       & GNi          & $2.28294{\times}10^{-8}$ & $1.11140{\times}10^{-8}$ & $8.60015{\times}10^{-14}$ & $3.03360{\times}10^{-4}$ & 99,467  & 6,141,656 \\
       & GN1          & $2.28294{\times}10^{-8}$ & $1.11140{\times}10^{-8}$ & $1.51830{\times}10^{-13}$ & $3.03360{\times}10^{-4}$ & 49,737  & 6,135,811 \\
\midrule
MIT-P  & GNi-SPO-IGG  & $1.72741{\times}10^{-2}$ & $2.06264{\times}10^{-2}$ & $1.73090{\times}10^{-3}$  & $1.384435$               & 3,799  (40,348) & 100,436   \\
       & GNi-SPO-LCG  & $1.79920{\times}10^{-2}$ & $9.21434{\times}10^{-1}$ & $5.77128{\times}10^{-2}$  & $2.324933$               & 2,650  (40,664) & 77,984    \\
       & GNi-SPO      & $1.72743{\times}10^{-2}$ & $1.99909{\times}10^{-2}$ & $1.64191{\times}10^{-3}$  & $2.101596$               & 25,159 (56,904) & 187,886   \\
       & GNi-IGG      & $71.4849$                & $40.6574$                & $25.0316$                 & $24.25101$               & 7,762   & 194,298   \\
       & GNi-LCG      & $93.2878$                & $45.3580$                & $36.7726$                 & $31.01479$               & 4,239   & 115,386   \\
       & GNi          & $1.72738{\times}10^{-2}$ & $1.99852{\times}10^{-2}$ & $3.85749{\times}10^{-12}$ & $2.101118$               & 56,753  & 803,583   \\
       & GN1          & $1.72738{\times}10^{-2}$ & $1.50452$                & $9.22196{\times}10^{-6}$  & $1.537856$               & 17,722  & 406,185   \\
\bottomrule
\end{tabular}}
\end{table*}

\begin{figure*}
\centering
\begin{subfigure}{0.5\linewidth}
    \includegraphics[scale=.27]{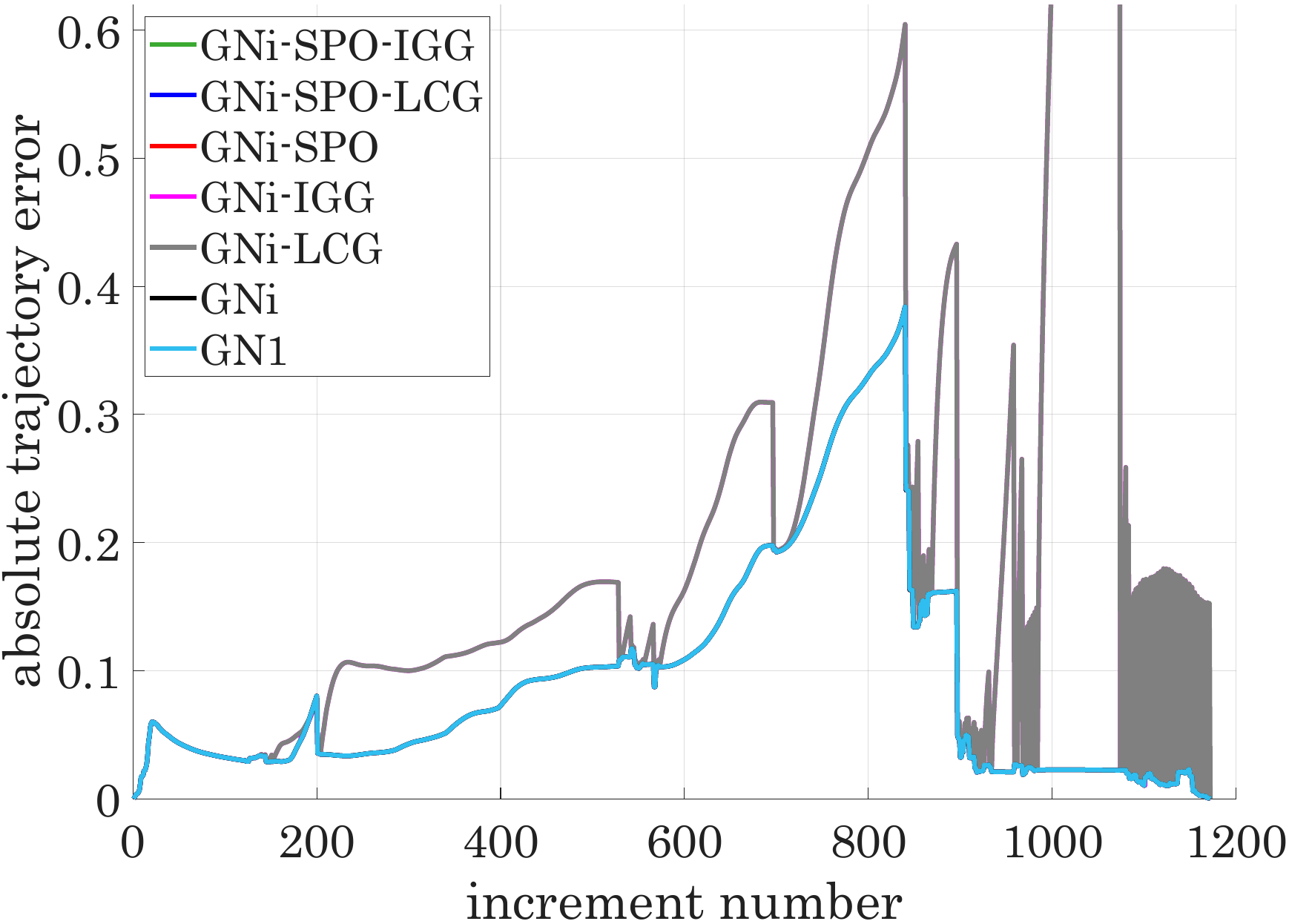}
    \caption{CSAIL}
    \label{fig:ate_csail}
\end{subfigure}\hfill
\begin{subfigure}{0.5\linewidth}
    \includegraphics[scale=.27]{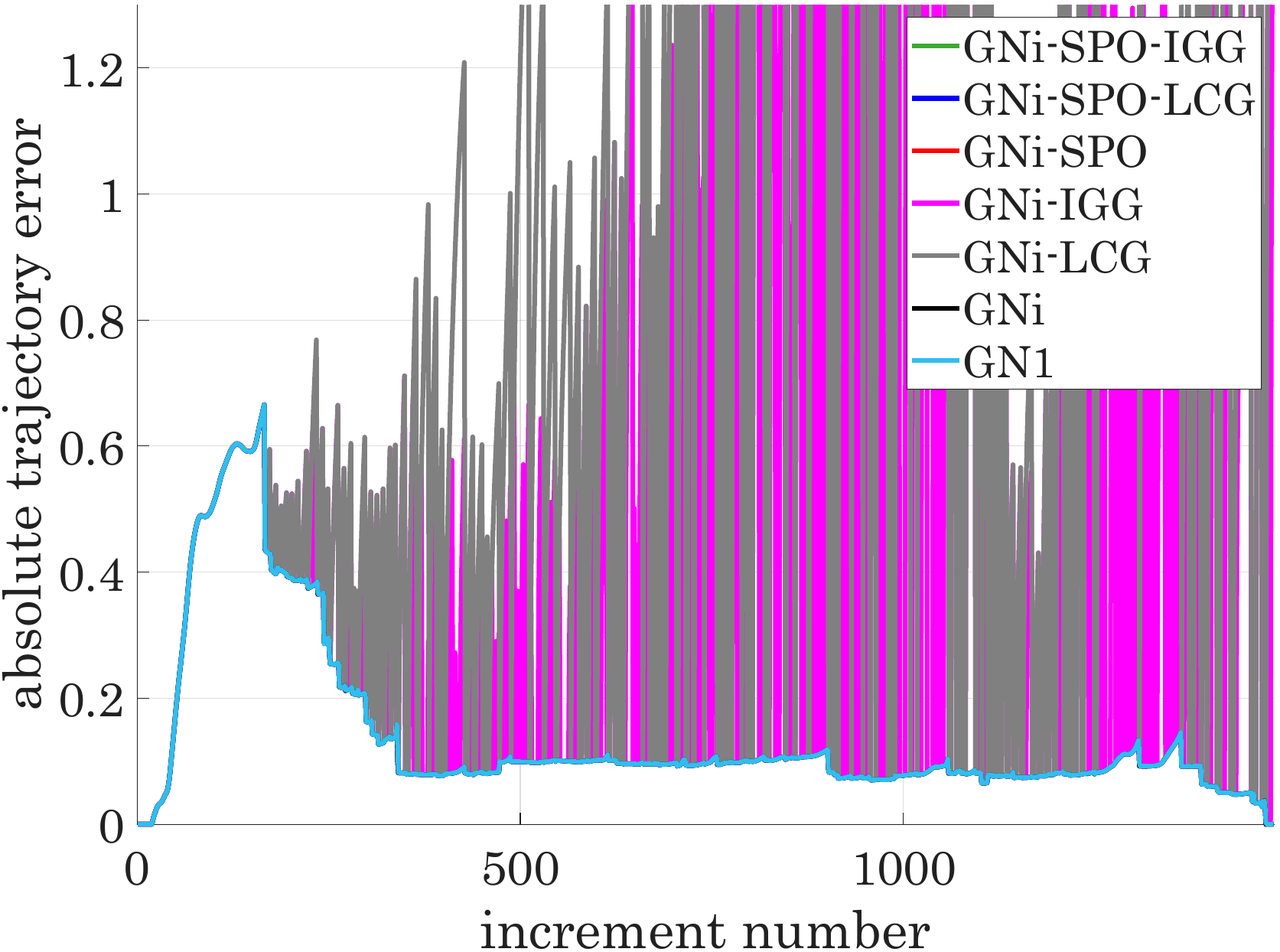}
    \caption{Intel}
    \label{fig:ate_intel}
\end{subfigure}
\begin{subfigure}{0.5\linewidth}
    \includegraphics[scale=.27]{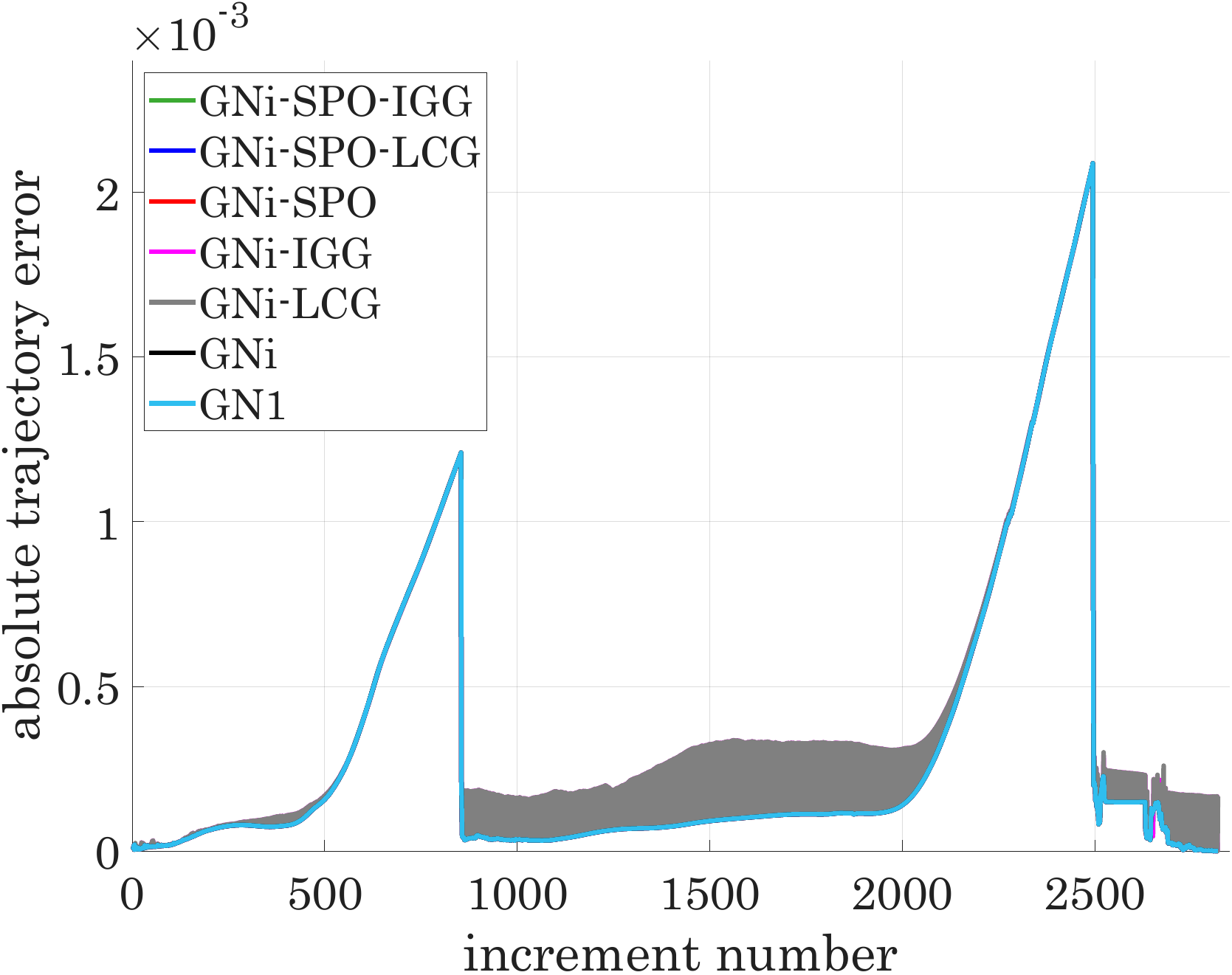}
    \caption{FRH}
    \label{fig:ate_frh}
\end{subfigure}\hfill
\begin{subfigure}{0.5\linewidth}
    \includegraphics[scale=.27]{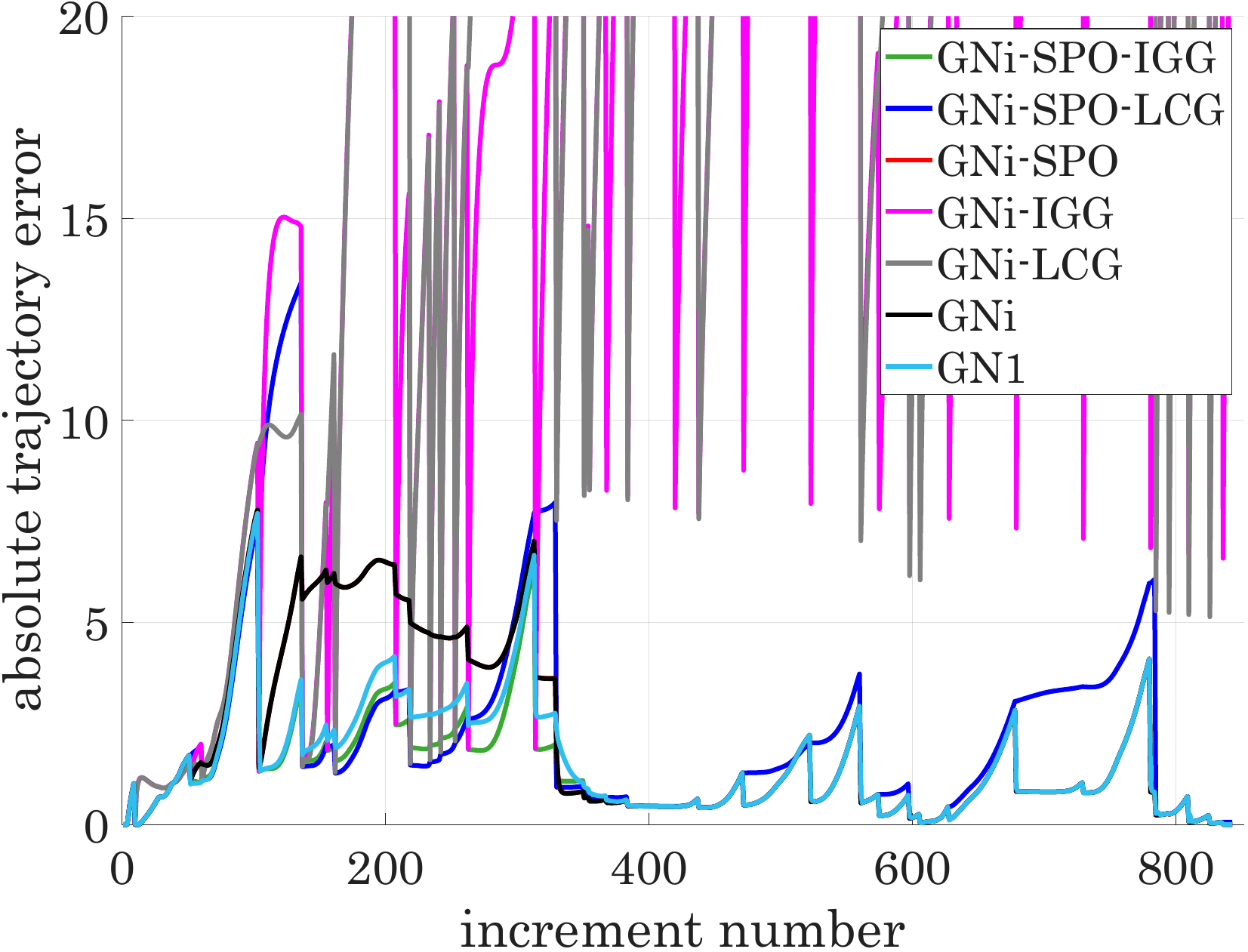}
    \caption{MIT-P}
    \label{fig:ate_mitp}
\end{subfigure}
\caption{ATE over increments for four datasets.}
\label{fig:ate}
\end{figure*}

\begin{figure*}
\centering
\begin{subfigure}{0.5\linewidth}
    \includegraphics[scale=.27]{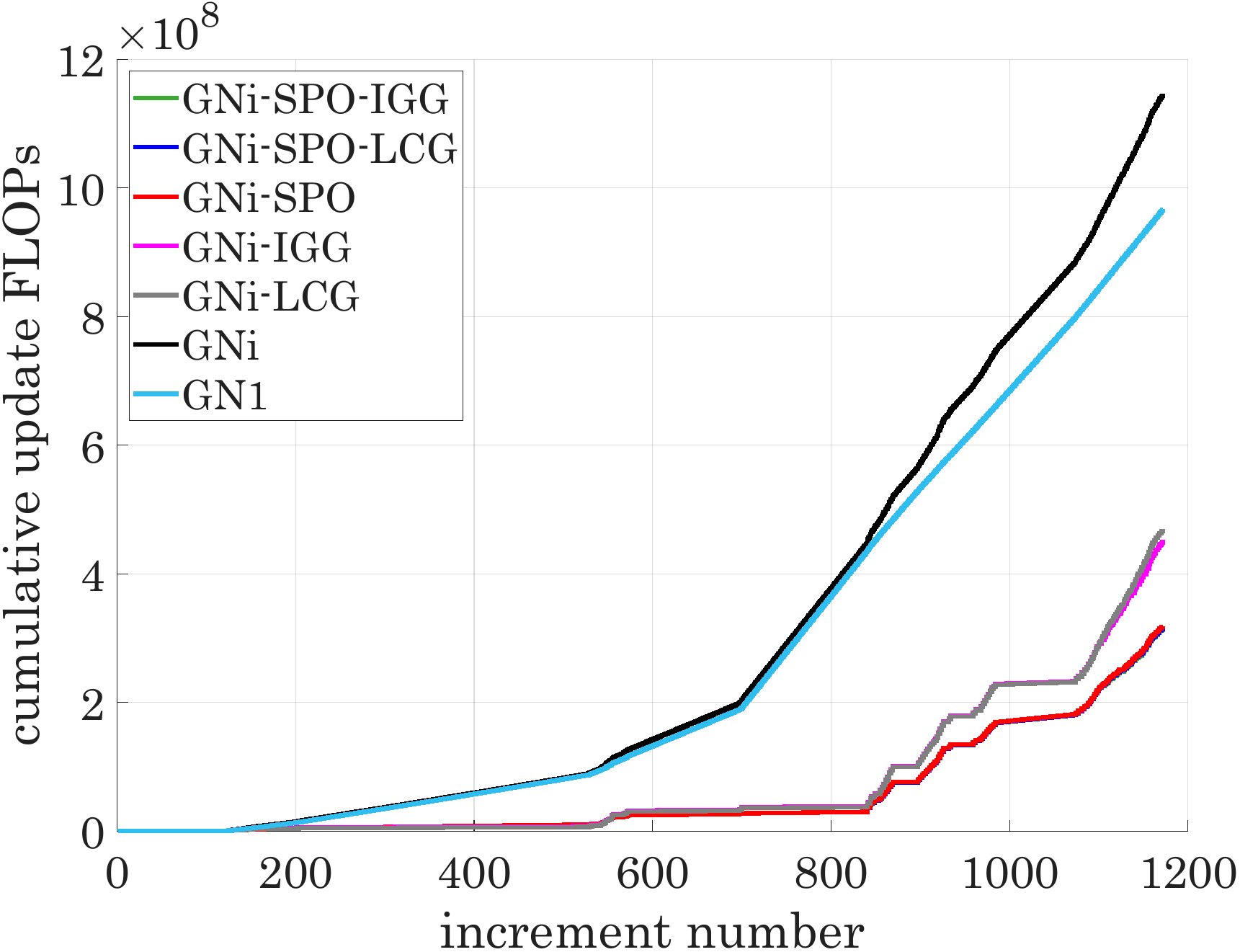}
    \caption{CSAIL}
    \label{fig:update_csail}
\end{subfigure}\hfill
\begin{subfigure}{0.5\linewidth}
    \includegraphics[scale=.27]{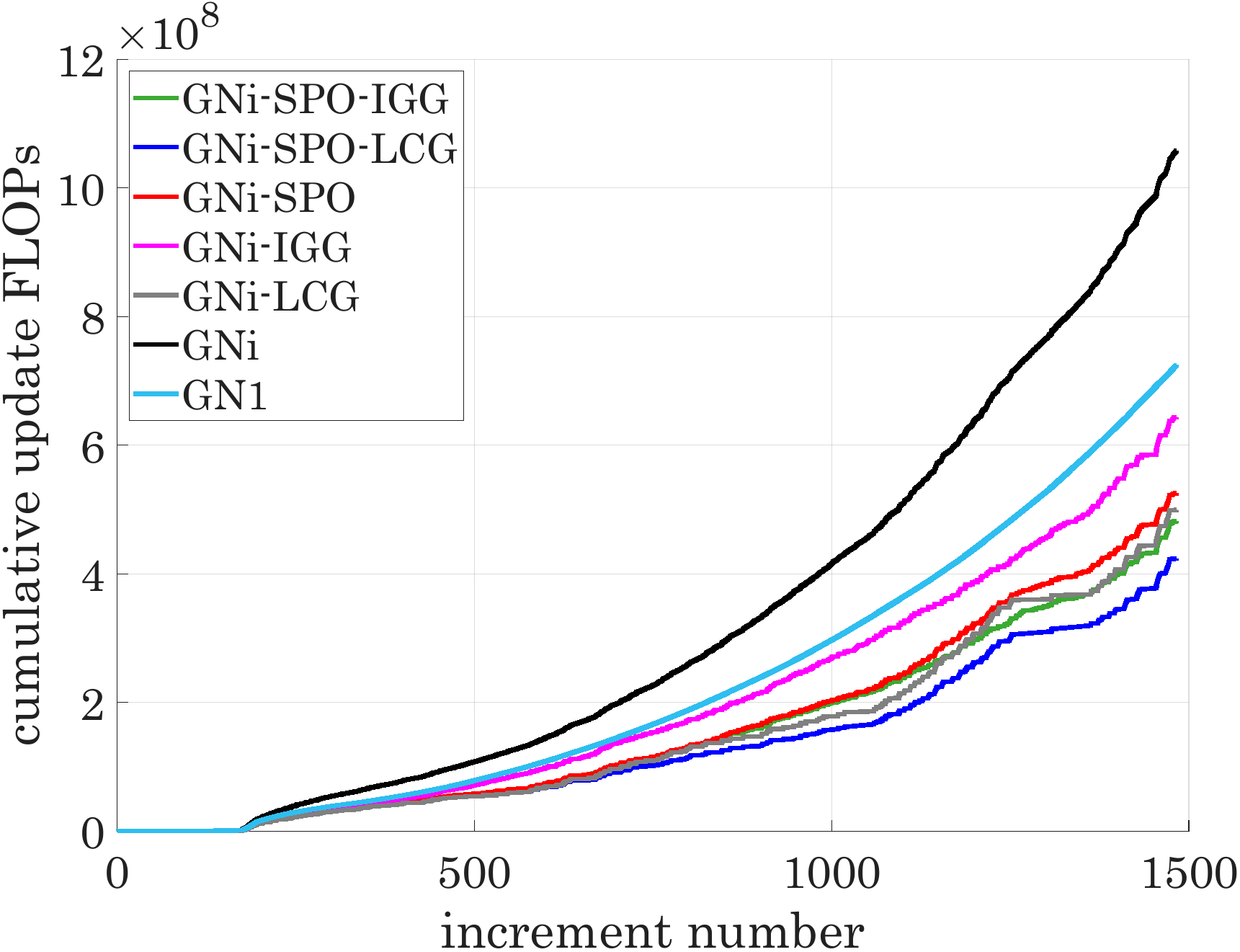}
    \caption{Intel}
    \label{fig:update_intel}
\end{subfigure}
\begin{subfigure}{0.5\linewidth}
    \includegraphics[scale=.27]{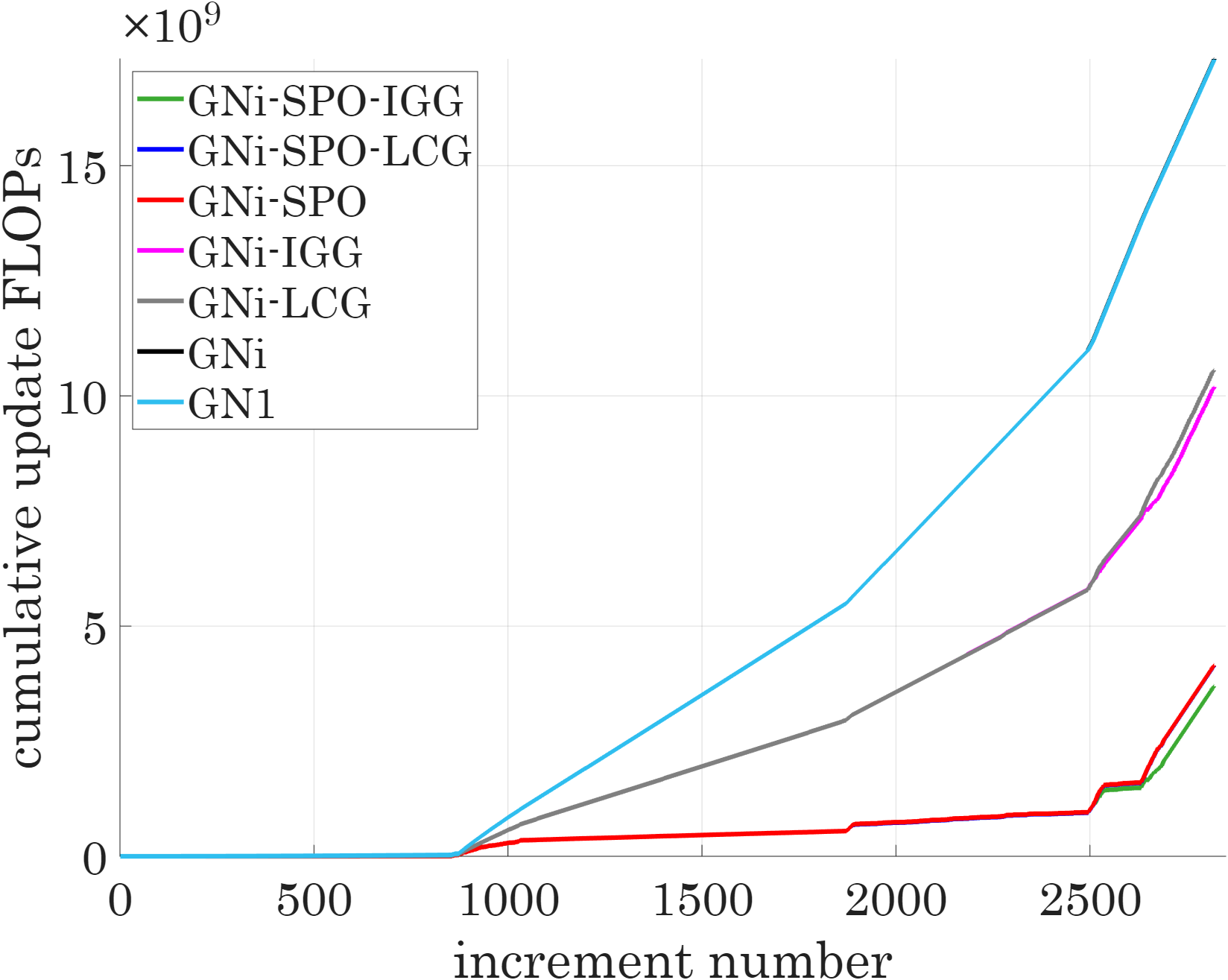}
    \caption{FRH}
    \label{fig:update_frh}
\end{subfigure}\hfill
\begin{subfigure}{0.5\linewidth}
    \includegraphics[scale=.27]{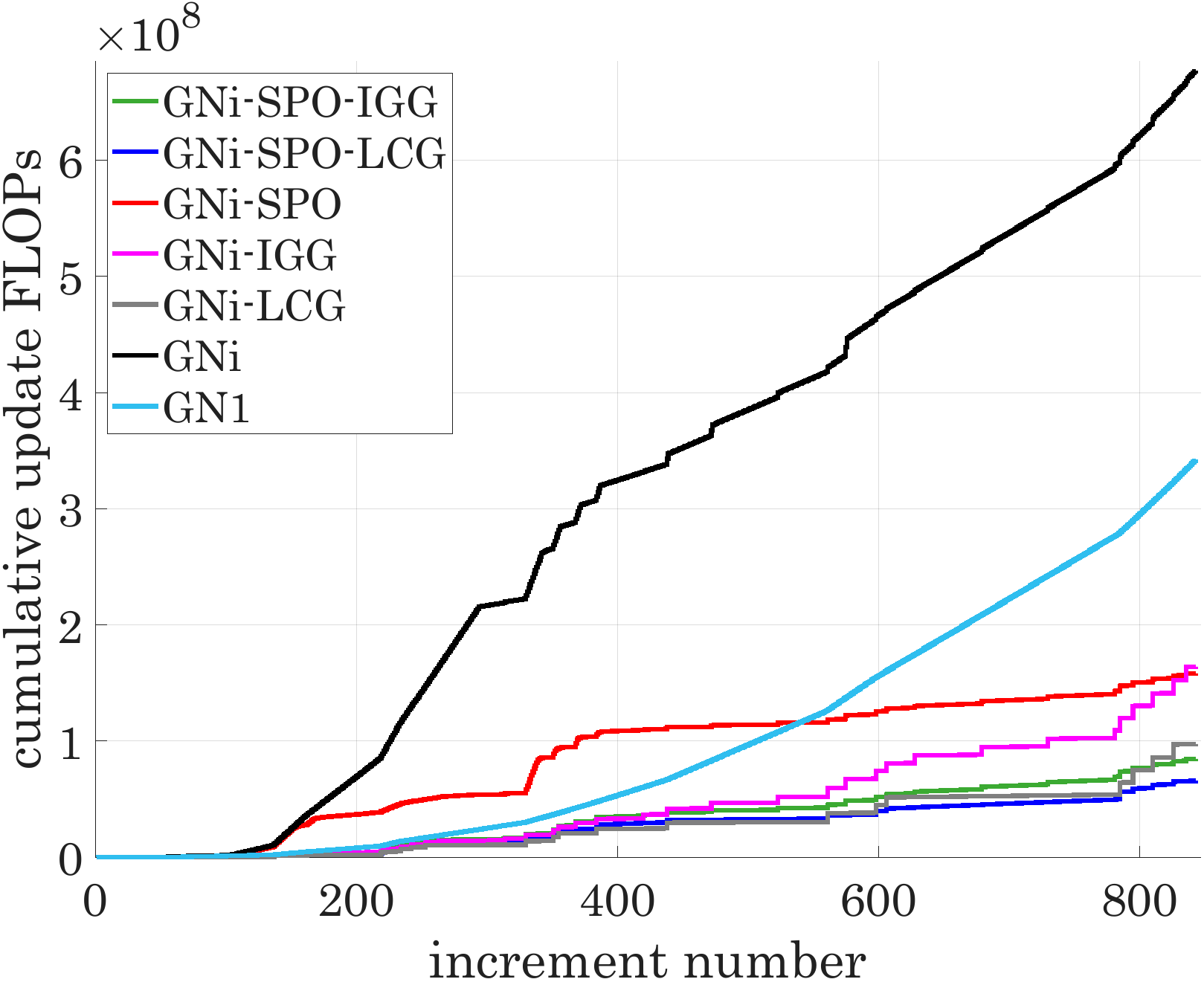}
    \caption{MIT-P}
    \label{fig:update_mitp}
\end{subfigure}
\caption{Cumulative update FLOPs over increments for four dataset.}
\label{fig:flops}
\end{figure*}

\begin{figure*}
\centering
\begin{subfigure}{0.5\linewidth}
    \includegraphics[scale=.27]{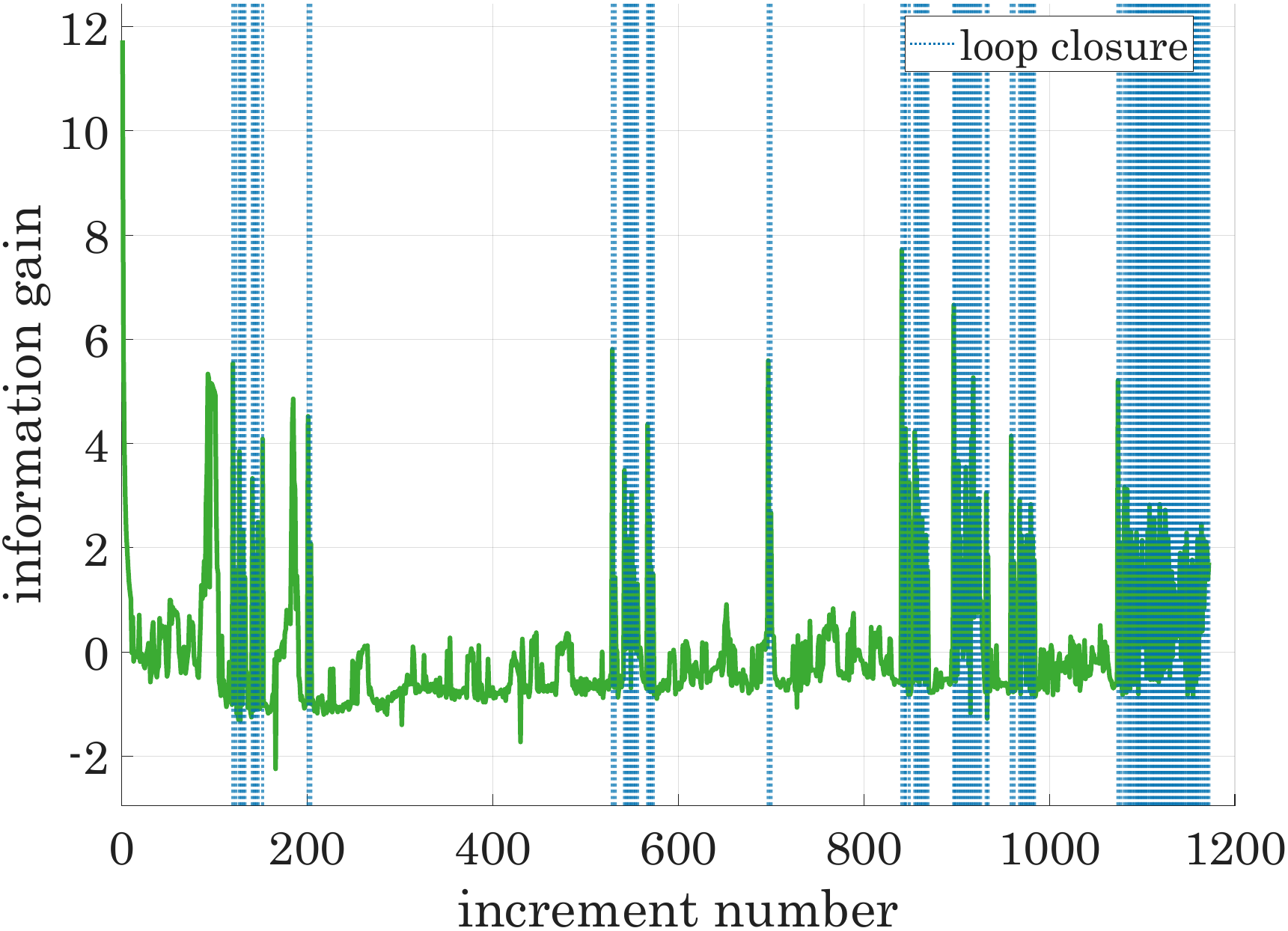}
    \caption{CSAIL}
    \label{fig:info_CSAIL}
\end{subfigure}\hfill
\begin{subfigure}{0.5\linewidth}
    \includegraphics[scale=.27]{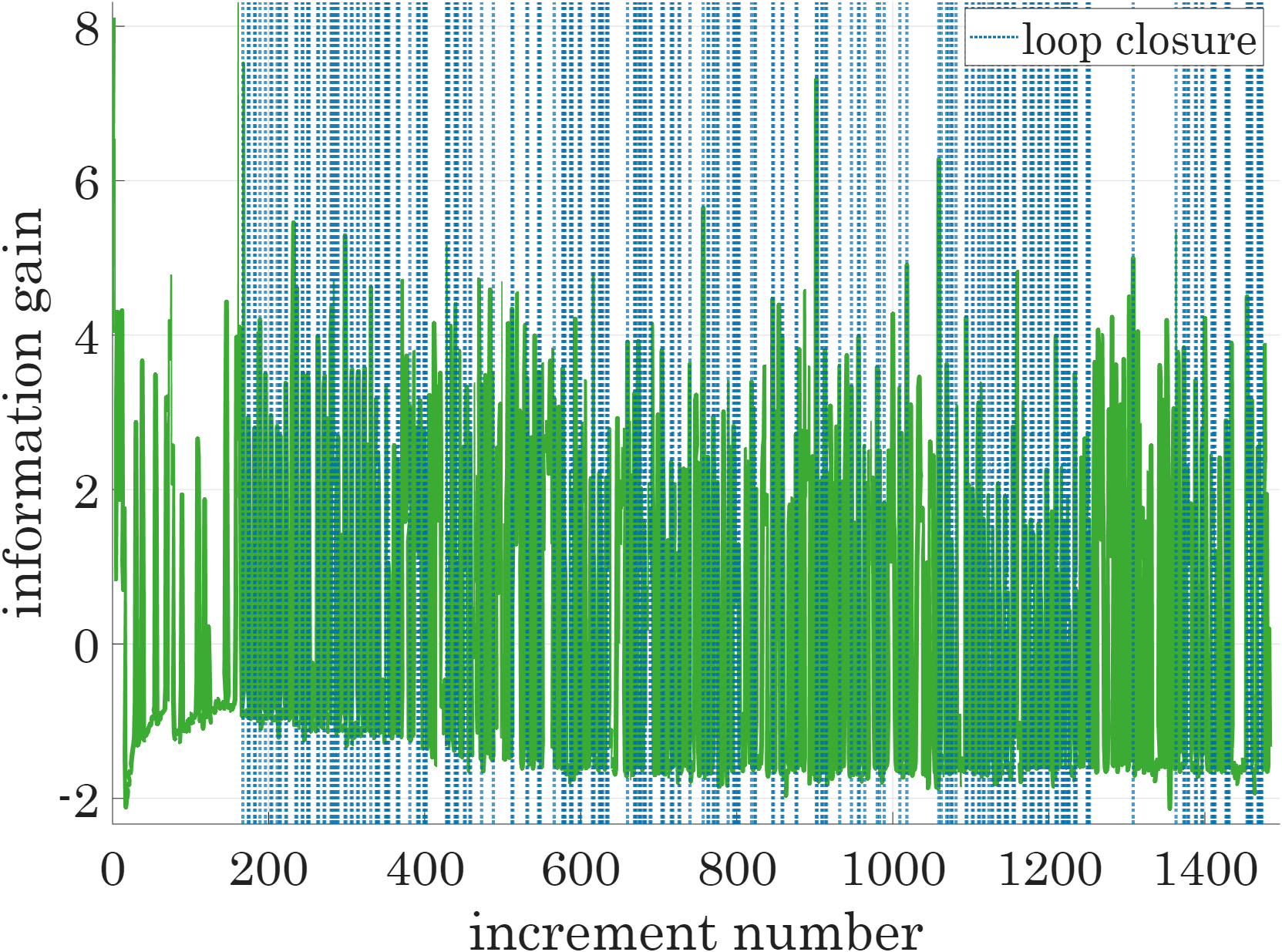}
    \caption{Intel}
    \label{fig:info_intel}
\end{subfigure}
\begin{subfigure}{0.5\linewidth}
    \includegraphics[scale=.27]{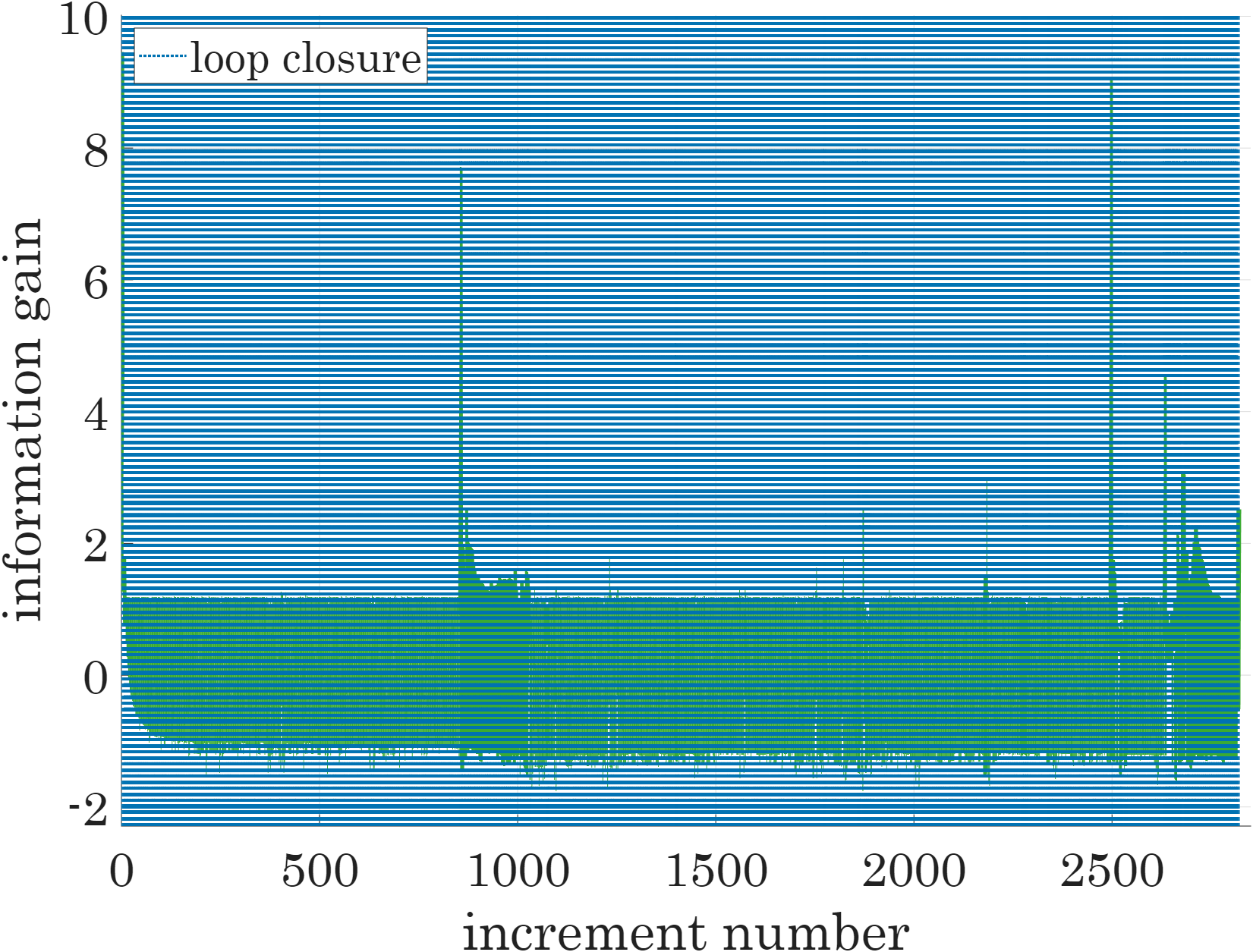}
    \caption{FRH}
    \label{fig:info_frh}
\end{subfigure}\hfill
\begin{subfigure}{0.5\linewidth}
    \includegraphics[scale=.27]{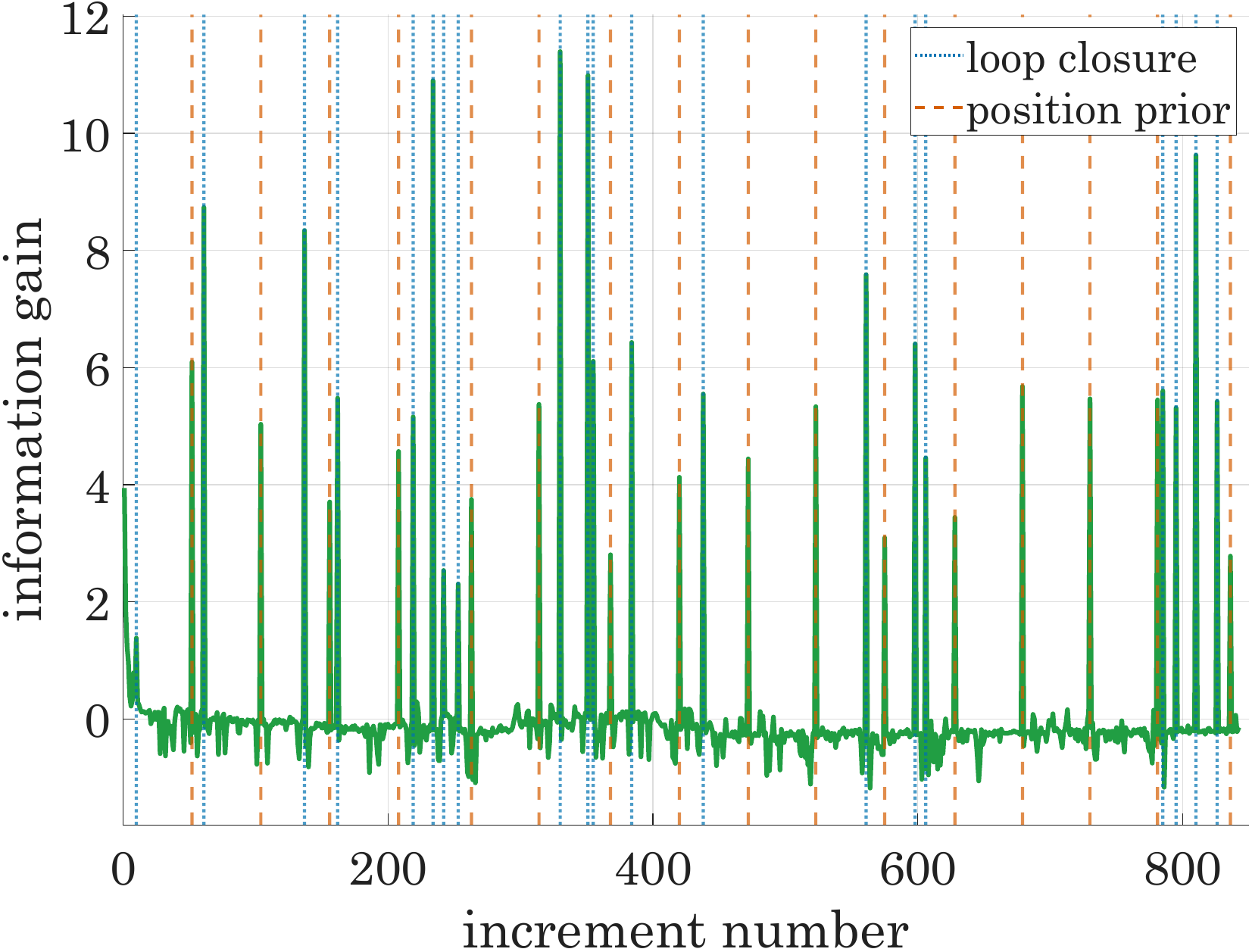}
    \caption{MIT-P}
    \label{fig:info_mitp}
\end{subfigure}
\caption{Information gain over increments for four dataset.}
\label{fig:info}
\end{figure*}

\begin{figure*}
\centering
\begin{subfigure}{0.5\linewidth}
    \includegraphics[scale=.27]{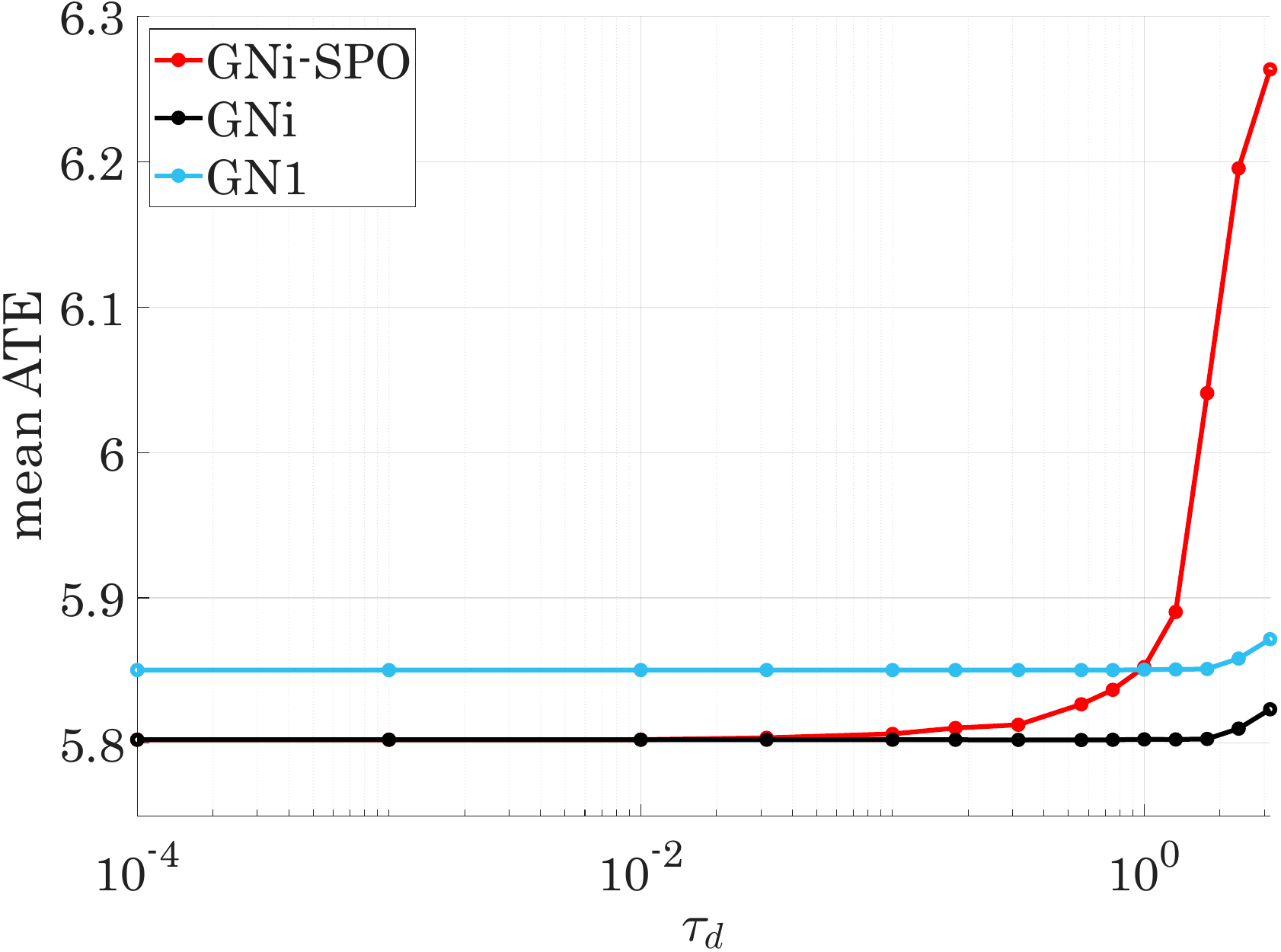}
    \caption{Mean ATE versus $\tau_d$.}
    \label{fig:ATE_d}
\end{subfigure}\hfill
\begin{subfigure}{0.5\linewidth}
    \includegraphics[scale=.27]{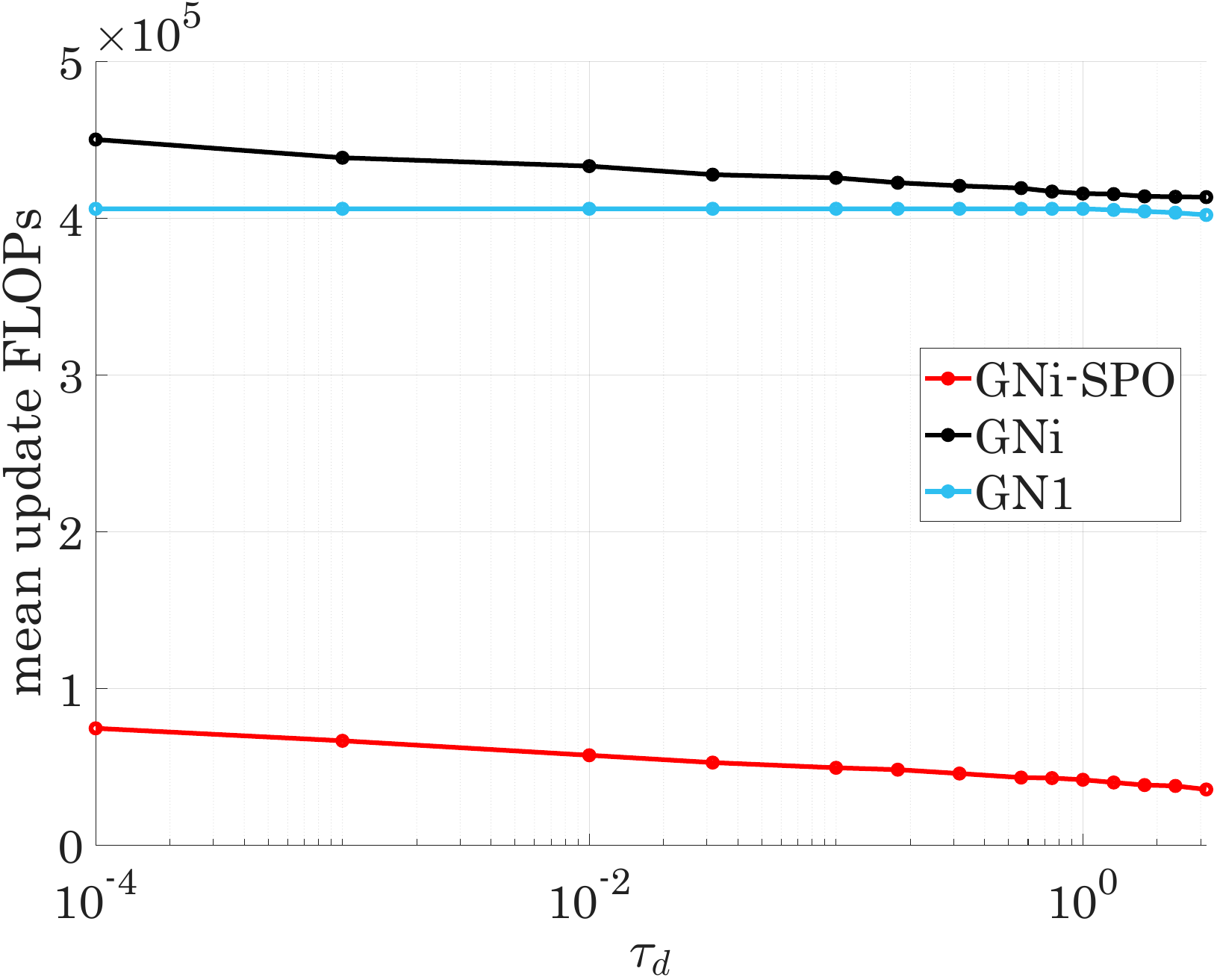}
    \caption{Mean update FLOPs versus $\tau_d$.}
    \label{fig:update_d}
\end{subfigure}
\caption{Performance of the proposed algorithm on the MIT dataset for different values of the update threshold $\tau_{d}$.}
\label{fig:vs_d}
\end{figure*}

\begin{figure*}
\centering
\begin{subfigure}{0.5\linewidth}
    \includegraphics[scale=.27]{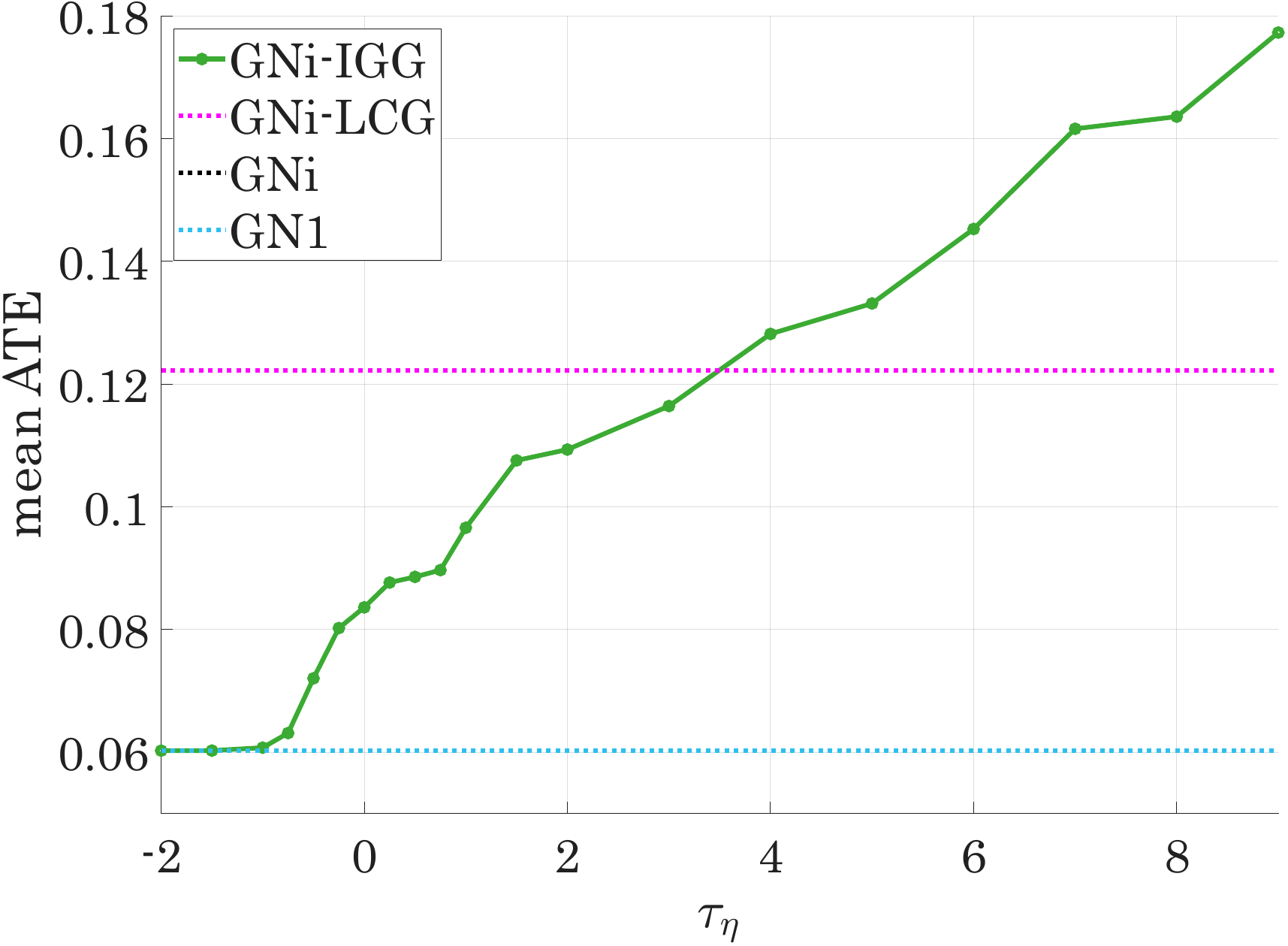}
    \caption{Mean ATE versus $\tau_\eta$.}
    \label{fig:ATE_eta}
\end{subfigure}\hfill
\begin{subfigure}{0.5\linewidth}
    \includegraphics[scale=.27]{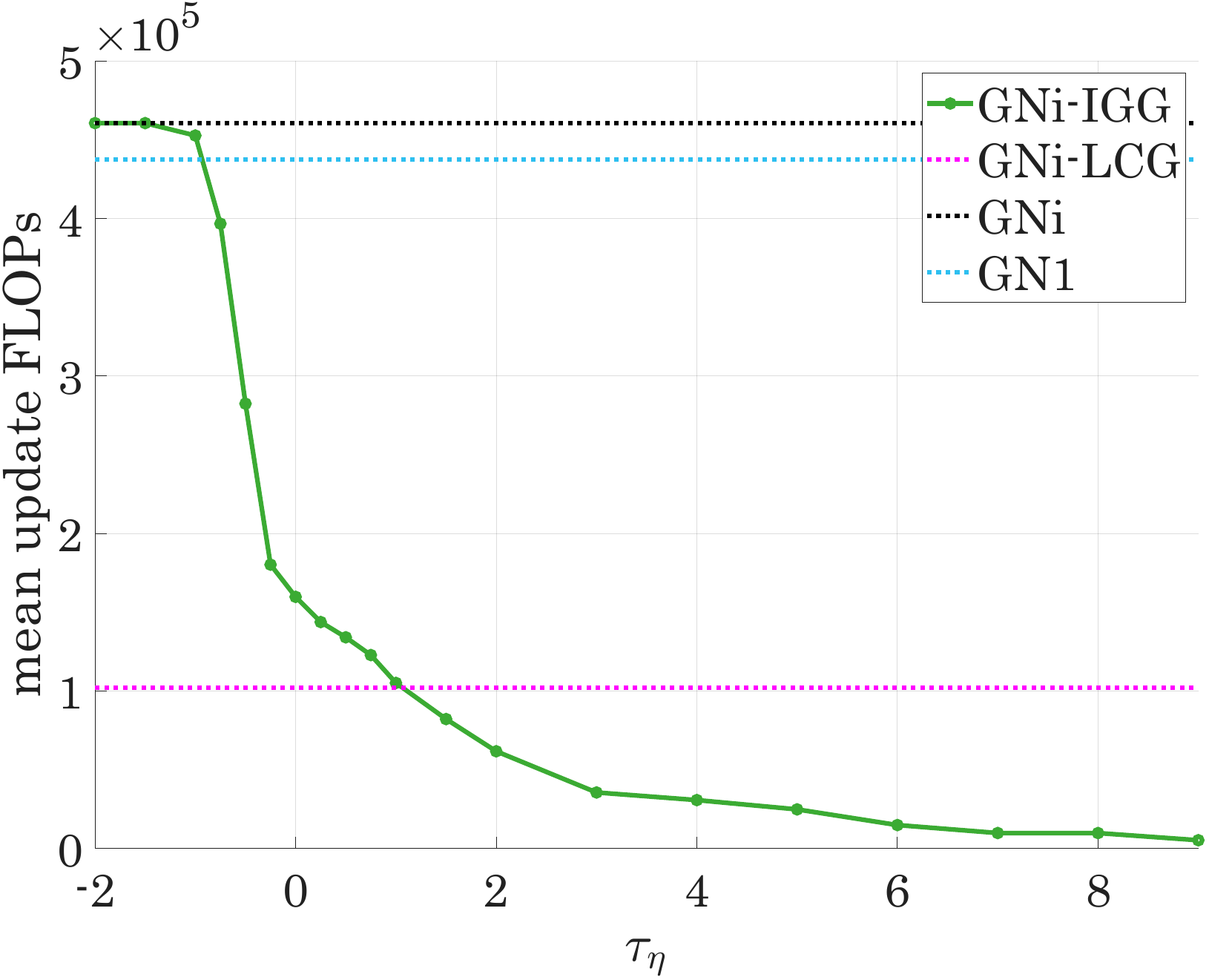}
    \caption{Mean update FLOPs versus $\tau_\eta$.}
    \label{fig:update_eta}
\end{subfigure}
\caption{Performance of the proposed approach on the FR079 dataset for different values of the information-gain threshold $\tau_{\eta}$.}
\label{fig:vs_eta}
\end{figure*}

\subsection{Discussion}

From the results in Table~\ref{tab:results} and Fig.~\ref{fig:ate}, several consistent patterns emerge across datasets of different scale and structure. The first and most important trend is that the proposed GNi-SPO-IGG achieves accuracy essentially indistinguishable from full GNi while requiring far less computation. Both final and mean N$\chi^2$ and ATE values closely track those of the batch-equivalent baseline, demonstrating that selectively solving for and relinearizing only affected variables, when guided by the information-gain criterion, preserves estimation quality. For instance, on the FR079 and FRH datasets, the discrepancies in error between GNi-SPO-IGG and GNi are negligible, yet the proposed approach reduces solve and update FLOPs significantly. This confirms that careful gating and partial optimization can provide substantial efficiency without accuracy degradation.

The efficiency gains are particularly pronounced in larger and denser problems such as CSAIL, Intel, and FRH. As shown in Fig.~\ref{fig:flops}, the computational cost of always performing full updates grows rapidly in these cases, whereas GNi-SPO-IGG effectively contains this growth by adaptively restricting updates to variables most affected by new information. This yields a two- to eight-fold reduction in computation compared to GNi, while both final and mean errors remain essentially unchanged. Even in smaller-scale datasets such as MIT and FR079, where graphs are less dense and updates relatively inexpensive, the proposed method still delivers notable efficiency improvements without sacrificing accuracy. These results suggest that the scalability advantage of the proposed approach will become even more significant as problem size increases.

A comparison between GNi-SPO-IGG and GNi-SPO-LCG further illustrates the value of information-guided gating. In datasets such as MIT, where most of the informative content arises from a small number of loop-closure measurements, both approaches attain similar accuracy and complexity. However, when substantial information is introduced through non-loop-closure measurements, such as position priors in MIT-P, GNi-SPO-LCG fails to exploit this information effectively. By contrast, GNi-SPO-IGG leverages such priors through its information-gain-based gating, achieving markedly better accuracy while maintaining comparable computational cost. This demonstrates the broader advantage of IGG: its ability to incorporate diverse sources of information beyond loop closures, ensuring consistent accuracy across heterogeneous sensing scenarios and enhancing scalability to real-world, long-term SLAM deployments.

Approaches based solely on gating (GNi-LCG and GNi-IGG) exhibit significant instability. While their final error values can sometimes appear acceptable, mean errors often increase by orders of magnitude, particularly on the MIT-P and Intel datasets. GNi-IGG performs only marginally better than GNi-LCG, yet still produces substantially larger errors than when SPO is employed. This highlights the limitations of gating alone, whether based on loop closures or information gain, which cannot adequately respond to incremental odometry or prior information, leaving large portions of the trajectory insufficiently corrected. The outcome is poor trajectory quality for much of the run, even if later corrections at loop closures or high-information events produce seemingly reasonable final results. These findings underscore that loop closures or information gain alone are insufficient for reliable incremental smoothing, and that continuous incremental corrections are essential.

The GN1 baseline, which mirrors iSAM2’s single-GN-iteration strategy, provides another instructive comparison. While GN1 often achieves final accuracy comparable to GNi, it does so at the cost of significantly elevated mean errors. For example, on the Intel dataset, GN1 attains a similar final error to GNi but with mean errors two orders of magnitude higher, while also requiring more computations compared to all considered approaches except GNi. Its poor intermediate accuracy on some datasets can be attributed to its inability to converge within a single GN iteration per increment, particularly when successive high-information measurements arrive in quick succession. In such cases, GN1 cannot keep pace with the influx of information, leaving large errors uncorrected until much later. These observations highlight the importance of adaptivity not only in deciding \emph{when} and \emph{what} to update, but also in determining \emph{how much} to update at each increment.

Trajectory-level analysis, as illustrated in Fig.~\ref{fig:ate}, reinforces these observations. Algorithms incorporating SPO maintain ATE values comparable to GNi across the increments, while gating-only methods (GNi-LCG and GNi-IGG) exhibit spiking or oscillatory error patterns whenever increments contain only low-information odometry measurements. On the synthetic MIT-P dataset, which includes intermittent external priors, these differences are even more pronounced: GNi-LCG and GNi-IGG exhibit very high intermediate errors, with mean values several orders of magnitude larger than those of the other algorithms, whereas the remaining approaches maintain low final errors and reasonable mean error performance. Notably, GNi-SPO-IGG achieves accuracy similar to GNi but at significantly lower computational cost compared to GN1, GNi, and GNi-SPO, demonstrating robustness to external information while avoiding wasted computation. Although GNi-SPO-LCG incurs somewhat lower computational cost than GNi-SPO-IGG, its final and mean errors are substantially higher, owing to its failure to trigger highly beneficial global updates when informative but non-loop-closure position priors arrive. This underscores a key design lesson: information-gain-based gating is essential for effectively leveraging diverse external cues, such as GNSS, UWB, or vision-based priors, ensuring not only accuracy with minimal computation, but also scalability to long-term, multimodal SLAM deployments in real-world environments.

The results presented in Table~\ref{tab:results} and Figs.~\ref{fig:ate} and~\ref{fig:flops} correspond to settings where GNi-SPO-IGG achieves accuracy comparable to the baseline GNi. However, as shown in Figs.~\ref{fig:ATE_d} and~\ref{fig:ATE_eta}, the proposed approach provides a tunable trade-off between accuracy and computational cost. By adjusting the thresholds $\tau_d$ and $\tau_\eta$, one can balance estimation accuracy against computational efficiency according to application requirements.

\section{Concluding Remarks}

We presented a novel incremental SLAM optimizer that achieves a principled balance between accuracy and efficiency by combining an information-theoretic update trigger with selective partial optimization. The key idea is to monitor the change in system entropy (approximated via the log-determinant of the information matrix) induced by new measurements and to use this to decide whether to perform a global update or restrict attention to local variables. When a global update is triggered, GN iterations are executed to convergence, but each iteration is confined to the subset of variables most affected by the recent measurements. This selective update strategy exploits the conditional independence structure of SLAM and yields solutions nearly as accurate as batch optimization while requiring substantially fewer updates and significantly lower computational cost.

Extensive experiments on diverse SLAM benchmark datasets demonstrate that the proposed approach matches the accuracy of batch and full incremental solvers while consistently achieving large computational savings. In particular, its final and mean normalized chi-squared and absolute trajectory errors remain essentially identical to those of GNi, while solve and update FLOPs are reduced by factors of two to eight, especially in large-scale datasets such as CSAIL, Intel, and FRH. The information-gain-based gating strategy generalizes loop-closure detection by capturing all high-impact events, including loop closures, dense reobservations, and external priors such as GNSS updates, while simultaneously filtering out low-information increments that contribute little beyond increased computational burden. The results also highlight the robustness of the proposed approach to external priors: unlike loop-closure-only approaches, our algorithm can seamlessly exploit intermittent position priors without destabilizing the trajectory estimate.

A central feature of our approach is the selective update of only those variables that remain unconverged during GN iterations. Following highly informative measurements, such as major loop closures, typically only a small fraction of the state variables requires further refinement in subsequent iterations. SPO exploits this by restricting computation to the active subset, thereby achieving substantial cost savings without sacrificing accuracy. By contrast, GNi, the full-update algorithm with no selectivity, converges to the same solution but incurs several times the computational cost during large re-optimizations.

In relation to iSAM2, one of the most widely recognized incremental SLAM back-ends, we emphasize an important conceptual and practical distinction. The core of iSAM2's fluid relinearization strategy is a heuristic, meaning it decides to relinearize a variable if the magnitude of its update exceed a threshold. While this can reduce computational cost by avoiding some recomputations, it also carries the risk of accumulating errors if small, but frequent, updates are repeatedly ignored. In contrast, our approach ensures consistent linearizations and prevents drift by always relinearizing any variable that is updated. We apply thresholds to the update step itself, determining if a variable requires further optimization, rather than to the relinearization process as iSAM2 does. Our unshown experiments indicate that iSAM2's thresholds often need to be set very conservatively to prevent the system from diverging, which can make it particularly sensitive to poor initializations. Furthermore, unlike our approach, iSAM2 lacks both gating and partial optimization. It still solves the entire global linear system at every increment, regardless of which variables are actually updated. Our approach, as established in Section~\ref{sec:theory}, is grounded in a more robust framework. It strategically combines information-guided gating with selective partial optimization to achieve both high accuracy and computational efficiency.

Although our current implementation is in MATLAB/Octave and intended primarily for clarity and reproducibility, the algorithmic FLOP counts and structural complexity analysis strongly indicate that a C++ implementation would yield substantial runtime gains. In fact, the proposed framework is fully compatible with established SLAM libraries such as g2o and GTSAM, where fast factor reuse (e.g., via Bayes trees) could be combined with our selective multi-iteration optimization to achieve both scalability and real-time performance. We make our MATLAB/Octave code publicly available\footnote{The code and data used to produce the results in this paper are available at \url{github.com/Reza219/Incremental_SLAM}.} to ensure reproducibility, while noting that a C++ integration is a straightforward engineering extension and an important direction for future work.

The threshold parameters $\tau_\eta$ and $\tau_d$ were set empirically, but future work can explore adaptive schemes based on statistical criteria (e.g., significance tests on normalized chi-squared error growth). Additional possible future directions include incorporating robust cost functions (e.g., dynamic covariance scaling or switchable constraints) to improve resilience to outlier, applying the approach to fixed-lag smoothing where information gain can also inform safe marginalization, and integrating the same framework in active SLAM, where information gain can guide both update scheduling and exploration decisions in resource-constrained scenarios.

In summary, we have developed an efficient and principled incremental SLAM solver that delivers batch-level accuracy at a fraction of the computational cost. By bridging the gap between batch and incremental optimization, the proposed method enables accurate real-time SLAM in computationally constrained environments. It provides a solid foundation for high-precision mapping and localization on embedded platforms, and can be naturally extended to applications in 3D SLAM, multi-robot mapping, and adaptive perception.

\bibliographystyle{IEEEtran}
\bibliography{references}

\end{document}